\icmltitlerunning{Enhancing Robustness of Neural Networks through Fourier Stabilization}
\setlist{nosep}
\newif\ifarXiv
\pgfplotsset{compat=1.5.1}
\newcommand{\bE}{\mathbb{E}}
\newcommand{\bR}{\mathbb{R}}
\newcommand{\cH}{\mathcal{H}}
\newcommand{\cS}{\mathcal{S}}
\newcommand{\bolda}{\mathbf{a}}
\newcommand{\boldc}{\mathbf{c}}
\newcommand{\bolde}{\mathbf{e}}
\newcommand{\boldf}{\mathbf{f}}
\newcommand{\boldh}{\mathbf{h}}
\newcommand{\boldu}{\mathbf{u}}
\newcommand{\boldv}{\mathbf{v}}
\newcommand{\boldw}{\mathbf{w}}
\newcommand{\boldx}{\mathbf{x}}
\newcommand{\boldz}{\mathbf{z}}
\newcommand{\boldC}{\mathbf{C}}
\newcommand{\boldD}{\mathbf{D}}
\newcommand{\boldU}{\mathbf{U}}
\newcommand{\onenorm}[1]{\lVert #1 \rVert_1}
\newcommand{\twonorm}[1]{\lVert #1 \rVert_2}
\newcommand{\infnorm}[1]{\lVert #1 \rVert_\infty}
\newcommand{\pnorm}[1]{\lVert #1 \rVert_p}
\newcommand{\qnorm}[1]{\lVert #1 \rVert_q}
\DeclareMathOperator{\infl}{\textbf{Inf}}
\DeclareMathOperator{\sign}{sign}
\DeclareMathOperator{\argmax}{argmax}
\DeclareMathOperator{\sigmoid}{sigmoid}
\DeclareMathOperator{\var}{Var}
\DeclarePairedDelimiter{\norm}{\lVert}{\rVert} % Use \norm* to scale
\newtheorem{theorem}{Theorem}
\newtheorem{lemma}{Lemma}
\newtheorem{corollary}{Corollary}
\begin{document}

\twocolumn[
%\icmltitle{Robustness of Linear Classifiers through Fourier Stabilization}
\icmltitle{Enhancing Robustness of Neural Networks through Fourier Stabilization}

% It is OKAY to include author information, even for blind
% submissions: the style file will automatically remove it for you
% unless you've provided the [accepted] option to the icml2021
% package.

% List of affiliations: The first argument should be a (short)
% identifier you will use later to specify author affiliations
% Academic affiliations should list Department, University, City, Region, Country
% Industry affiliations should list Company, City, Region, Country

% You can specify symbols, otherwise they are numbered in order.
% Ideally, you should not use this facility. Affiliations will be numbered
% in order of appearance and this is the preferred way.
%\icmlsetsymbol{equal}{*}

\begin{icmlauthorlist}
\icmlauthor{Netanel Raviv}{W}
\icmlauthor{Aidan Kelley}{W}
\icmlauthor{Michael Guo}{W}
\icmlauthor{Yevgeny Vorobeychik}{W}
\end{icmlauthorlist}

\icmlaffiliation{W}{Department of Computer Science and Engineering, Washington University in St. Louis, 1 Brookings Dr., St. Louis, MO 63103}
\icmlcorrespondingauthor{Netanel Raviv}{netanel.raviv@wustl.edu}

% You may provide any keywords that you
% find helpful for describing your paper; these are used to populate
% the "keywords" metadata in the PDF but will not be shown in the document
\icmlkeywords{Machine Learning, ICML}

\vskip 0.3in
]

% this must go after the closing bracket ] following \twocolumn[ ...

% This command actually creates the footnote in the first column
% listing the affiliations and the copyright notice.
% The command takes one argument, which is text to display at the start of the footnote.
% The \icmlEqualContribution command is standard text for equal contribution.
% Remove it (just {}) if you do not need this facility.

\printAffiliationsAndNotice{}  % leave blank if no need to mention equal contribution
%\printAffiliationsAndNotice{\icmlEqualContribution} % otherwise use the standard text.

\begin{abstract}
   Despite the considerable success of  neural networks in security settings such as malware detection, such models have proved vulnerable to evasion attacks, in which attackers make slight changes to inputs (e.g., malware) to bypass detection.
   We propose a novel approach, \emph{Fourier stabilization}, for designing evasion-robust neural networks with binary inputs.  This approach, which is complementary to other forms of defense, replaces the weights of individual neurons with robust analogs derived using Fourier analytic tools.
   The choice of which neurons to stabilize in a neural network is then a combinatorial optimization problem, and we propose several methods for approximately solving it.
   We provide a formal bound on the per-neuron drop in accuracy due to Fourier stabilization, and experimentally demonstrate the effectiveness of the proposed approach in boosting robustness of neural networks in several detection settings.
   Moreover, we show that our approach effectively composes with adversarial training.

\end{abstract}

\section{Introduction}
Deep neural network models demonstrate human-transcending capabilities in many applications, but are often vulnerable to attacks that involve small (in $\ell_p$-norm) adversarial perturbations to inputs~\cite{Intriguing,Goodfellow15,Madry}. 
This issue is particularly acute in security applications, where a common task is to determine whether a particular input (e.g., executable, twitter post) is malicious or benign.
In these settings, malicious parties have a strong incentive to redesign inputs (such as malware) in order to \emph{evade} detection by deep neural network-based detectors, and there have now been a series of demonstrations of successful evasion attacks~\cite{grosse2016adversarial,li2018evasion,laskov2014practical,xu2016automatically}.
In response, a number of approaches have been proposed to create models that are more robust to evasion attacks~\cite{Cohen2019CertifiedAR,Lecuyer19,Raghunathan18,Wong18,Wong18b}, with methods using adversarial training---where models are trained by replacing regular training inputs with their adversarially perturbed variants---remaining the state of the art~\cite{Goodfellow15,Madry,tong2019improving,Eugene}.
Nevertheless, despite considerable advances, increasing robustness of deep neural networks to evasion attacks typically entails a considerable decrease in accuracy on unperturbed (clean) inputs~\cite{Madry,Wu20}.
%their intrinsic properties often remain a mystery. 
%In particular, it was recently shown~(e.g., \cite{Intriguing}) that small adversarial perturbations of the input to a model (specifically a \textit{deep neural network}) might cause it to reach nonsensical conclusions. 
%This observation has ignited an influx of research about attacks and defenses against such perturbations~\cite{grosse2016adversarial,li2018evasion,laskov2014practical,xu2016automatically}.%\cite{Sense,CnW,Deepfool,Papernot}.

%Broadly speaking, DNNs are a concatenation of layers of neurons, where each neuron is a hyperplane classifier, or an approximation thereof. These hyperplanes partition the input space into regions, separated by so-called \textit{decision boundaries}. As such, 

We propose a novel approach for enhancing robustness of deep neural networks with binary inputs to adversarial evasion that leverages Fourier analysis of Boolean functions~\cite{AnalysisOfBoolean}.
Unlike most prior approaches for boosting robustness, which aim to refactor the entire deep neural network, say, through adversarial training, our approach is more fine-grained, applied at the level of individual neurons.
Specifically, we start by treating neurons as linear classifiers over binary inputs, and considering their robustness as the problem of maximizing the average distance of \emph{all inputs in the input space} from the neuron's decision boundary.
% We then derive a closed-form solution to this optimization problem which replaces the original weights with more robust variants, a process that we call \emph{Fourier stabilization of neurons}.
We then derive a closed-form solution to this optimization problem; the process of replacing the original weights by their more robust variants, given by this solution, is called \emph{Fourier stabilization of neurons}. Further, a bound for the per-neuron drop in accuracy due to this process is derived.

This idea applies to most common activation functions, such as~$\operatorname{logistic}, \tanh$, $\operatorname{erf}$, and $\operatorname{ReLU}$ (treating activation as a binary decision).
Finally, we determine which subset of neurons in a neural network to stabilize.
While this is a difficult combinatorial optimization problem, we develop several effective algorithmic approaches for it.

Our full approach, which we call \emph{Fourier stabilization of a neural network} (abbrv.~stabilization), applies only to neural networks with binary inputs, and is targeted at security applications, where binary inputs are common and, indeed, it is often the case that binarized inputs outperform real-valued alternatives~\cite{Tong38,tong2019improving}.
We emphasize that our approach is \emph{complementary} to alternative defenses: it applies \emph{post-training}, and can thus be easily composed with any defense, such as adversarial example detection~\cite{Xu18} or adversarial training.
Moreover, as our approach does not require any training data (as it stabilizes neurons directly), it can even apply to settings where one has a neural network that needs to be made more robust, but not training data, which is sensitive (e.g., in medical and cybersecurity applications where data contains sensitive or classified information). Access to training data, however, enables the additional benefit of estimating robustness and accuracy in practice; we use this approach in our experiments to decide which subset of neurons to stabilize.

We experimentally evaluate the proposed \emph{Fourier stabilization} approach on several datasets involving detection of malicious inputs, including malware detection and hate speech detection.
Our experiments show that our approach considerably improves neural network robustness to evasion in these domains, and effectively composes with adversarial training defense.

\paragraph{Our contribution}

We begin in Section~\ref{section:preliminaries} by familiarizing the reader with the formal definition of robustness (specifically, \textit{prediction change} by \cite{AdversarialRiskBoolean}), its geometric interpretation, and provide some necessary background on Fourier analysis of Boolean functions. We proceed in Section~\ref{section:technical} by formulating the stabilization of neurons as an optimization problem, and solving it analytically for the~$\ell_1$-metric in Section~\ref{section:stabilization} (the solution for all other~$\ell_p$-metrics is given in\ifarXiv the appendix\else~\cite{arXivVersion}\fi). In Section~\ref{section:accuracy} we employ probabilistic tools from~\cite{AnalysisOfBoolean,ChowParameters,TestingHalfspaces} (among others) to bound the loss of accuracy that results from stabilization of a neuron, i.e., the fraction of inputs that would lie on the ``wrong'' side of its original decision boundary. 

In Section~\ref{section:NN} the discussion is extended to neural networks. It is observed that stabilizing the entire first layer might not be effective for improving robustness while maintaining accuracy. Instead, one should find an optimal subset of those, whose stabilization increases robustness the most, while maintaining bounded loss of accuracy. Since this combinatorial optimization problem is hard to solve in general, we suggest a few heuristics. The efficacy of these heuristics is demonstrated in Section~\ref{section:experiments} by showing improved accuracy-robustness tradeoff in classifying several commonly used cybersecurity datasets under state-of-the-art attacks. Further, it is also demonstrated that these techniques can be effectively used in conjunction with adversarial training. Future research directions are discussed in Section~\ref{section:discussion}.

% individual neurons is not effective for improving the formal definition of robustness in general, and yet a relaxed definition of robustness can be effectively employed. Then, few heuristics for addressing the accuracy issue in neural networks are suggested. In Section~\ref{section:experiments} we report our experimental findings that include \red{proving that~$P\ne NP$, achieving world-peace, and solving global warming.} 

%A thorough discussion about guarantees, tradeoffs, and extensions of our technique is given in Section~\ref{section:discussion}. We conclude the paper in Section~\ref{section:experiments} with experiments, showing a superior accuracy-robustness tradeoff in a stabilized vs. non-stabilized linear models under~$\ell_1$-attacks, and a minuscule drop in accuracy. In addition, we use our defense as a heuristic against attacks on sigmoid neural networks, and show similar results. Interestingly, our experiments show that for the~$\ell_2$-metric, the trained weights are usually very close to the stabilized ones, a result which might be of independent interest.
%
%Usually, modifying weights changes the accuracy. By employing probabilistic tools from~\cite{AnalysisOfBoolean,ChowParameters,TestingHalfspaces} (among others), we bound the fraction of inputs that would lie on the ``wrong'' side of the original decision boundary as a result of stabilization (Section~\ref{section:accuracy}). 

\section{Preliminaries}\label{section:preliminaries}
For~$\boldw\in\bR^n$ and~$\theta\in\bR$, denote the hyperplane~$\cH=\{ \boldx\in\bR^n\vert \boldx\boldw^\intercal=\theta \}$ by~$\cH(\boldw,\theta)$. 
Our fundamental technique operates at the level of neurons in a neural network, which we treat as (generalized) linear models.
We start by considering  linear models of the form $h(\boldx)=\sign(\boldx\boldw^\intercal-\theta)$ that map binary inputs $\boldx\in\{ \pm 1 \}^n$ to binary outputs; below, we discuss how the machinery we develop applies to a variety of activation functions.
%A linear model~$h(\boldx)=\sign(\boldx\boldw^\intercal-\theta)$ classifies a binary input~$\boldx\in\{ \pm 1 \}^n$ as~$1$ if~$\boldx\boldw^\intercal\ge \theta$, and as~$-1$ otherwise. 
%For a vector~$\boldv=(v_1,\ldots,v_n)$ and a scalar~$b$ we denote~$\boldv^b\triangleq(v_1^b,\ldots,v_n^b)$. 
%For simplicity, we assume that all models depend on every entry of the input (this can be verified with high probability), and that decision boundaries do not pass through any point of the discrete hypercube~$\{ \pm 1 \}^n$ (this can be attained by slight perturbations of~$\boldw$, that do not change the accuracy).
For~$1\le p\le  \infty$ let~$d_p$ and~$\pnorm{\cdot}$ be the~$\ell_p$-distance and~$\ell_p$-norm, respectively. That is, for vectors~$\boldv=(v_i)_{i=1}^n$ and~$\boldu=(u_i)_{i=1}^n$ let~$\pnorm{\boldv}=(\sum_{i=1}^n|v_i|^p)^{1/p}$ (or $\max\{ |v_i| \}_{i=1}^n$ if~$p=\infty$) and~$d_p(\boldv,\boldu)=\pnorm{\boldv-\boldu}$. 
%To define the problem generally for every~$\ell_p$-distance we recall the definition of \textit{dual norms}. 
For real numbers~$q,p\ge 1$, the norms~$\ell_p$ and~$\ell_q$ are called \textit{dual} if~$\frac{1}{p}+\frac{1}{q}=1$.
%, i.e.,~$q=\frac{p}{p-1}$. 
For example, the dual norm of~$\ell_2$ is itself, and the dual norm of~$\ell_1$ is~$\ell_\infty$. In the remainder of this paper,~$\ell_p$ and~$\ell_q$ denote dual norms.
%, and the following theorem is used. %While~$\ell_p$ norms can be discussed for any~$p\ge1$, we focus on~$p\in\{1,2,\ldots,\infty\}$.
%The~$\ell_p$-distance~$d_p(\boldz,\cH(\boldv,\mu))$ of a point~$\boldz\in\bR^n$ from a hyperplane~$\cH(\boldv,\mu)$, i.e., , is given by the following formula.
We will make use of the following theorem:
\begin{theorem}\label{theorem:lp}
	\cite{LpDistance} (Sec.~5) For a hyperplane~$\cH(\boldv,\mu)\subseteq\bR^n$, a point~$\boldz\in\bR^n$, and any~$p\ge 1$, let $d_p(\boldz,\cH(\boldv,\mu))$ denote the~$\ell_p$-distance of~$\cH(\boldv,\mu)$ from~$\boldz$, i.e., $\min\{ d_p(\boldu,\boldz)\vert\boldu\in\cH(\boldv,\mu) \}$. Then, we have $d_p(\boldz,\cH(\boldv,\mu))=\frac{|\boldz\cdot\boldv^\intercal-\mu|}{\norm{\boldv}_q}$.
	%	\begin{align*}
	%	d_p(\boldz,\cH)=\frac{|\boldz\cdot\boldv^\intercal-\mu|}{\norm{\boldv}_q}.
	%	\end{align*}
\end{theorem}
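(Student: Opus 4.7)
The plan is to reduce the problem to a constrained norm-minimization and attack it with Hölder's inequality. First I would change variables by writing $\boldw = \boldz - \boldu$, so that
\[
d_p(\boldz,\cH(\boldv,\mu)) \;=\; \min\bigl\{\, \pnorm{\boldw} \,:\, (\boldz-\boldw)\boldv^\intercal = \mu \,\bigr\} \;=\; \min\bigl\{\, \pnorm{\boldw} \,:\, \boldw\boldv^\intercal = c \,\bigr\},
\]
where $c := \boldz\boldv^\intercal - \mu$. This strips away the affine offset and reduces the task to computing the minimum $\ell_p$-norm of a vector with a prescribed inner product against $\boldv$.

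Next I would obtain the lower bound by invoking Hölder's inequality in its dual-norm form: for any $\boldw$ with $\boldw\boldv^\intercal = c$,
\[
|c| \;=\; |\boldw\boldv^\intercal| \;\le\; \pnorm{\boldw}\cdot\qnorm{\boldv},
\]
so $\pnorm{\boldw} \ge |c|/\qnorm{\boldv}$, which gives the claimed expression as a lower bound on the distance. This step is essentially a one-liner and handles all $p\ge 1$ uniformly (using the convention $\infty$-norm for $p=1$).

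The remaining work is to show the bound is tight by exhibiting a minimizer. I would split into cases according to when equality holds in Hölder. For $1 < p < \infty$, set $w_i = \lambda\,\sign(v_i)\,|v_i|^{q-1}$ and choose $\lambda$ so that $\boldw\boldv^\intercal = c$; direct computation using $p(q-1)=q$ shows $\pnorm{\boldw} = |c|/\qnorm{\boldv}$. For $p=1$ (so $q=\infty$), place all the mass on a single coordinate $i^*$ maximizing $|v_i|$, namely $w_{i^*} = c/v_{i^*}$ and $w_i=0$ otherwise. For $p=\infty$ (so $q=1$), take $w_i = \sign(v_i)\cdot c/\onenorm{\boldv}$ for every $i$. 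In each case a short check verifies the constraint and matches the Hölder bound, giving the distance formula and an explicit closest point $\boldu = \boldz - \boldw$ on $\cH(\boldv,\mu)$.

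I do not anticipate a real obstacle here; the only mild care is in the boundary cases $p\in\{1,\infty\}$, where the Hölder equality condition is combinatorial rather than given by a smooth first-order condition, so the minimizer must be constructed by hand as above rather than by differentiating.
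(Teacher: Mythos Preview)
Your proof is correct and complete. The paper itself does not supply a proof of this theorem: it is stated as a cited result from \cite{LpDistance} (Sec.~5) and used as a black box, so there is no in-paper argument to compare against. Your reduction to minimizing $\pnorm{\boldw}$ subject to the linear constraint $\boldw\boldv^\intercal=c$, followed by the H\"older lower bound and explicit equality-case constructions for $1<p<\infty$, $p=1$, and $p=\infty$, is the standard and cleanest way to establish the formula.
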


\subsection{Definition of Robustness}\label{section:defOfRobustness}
We operate under the geometric interpretation of robustness, in which the adversary is given a random~$\boldx\in\{ \pm1  \}^n$, and would like to apply minimum~$\ell_p$-change to induce misclassification. Since we address binary inputs, we focus our attention on~$p=1$, even though our techniques are also applicable to~$1<p\le \infty$. The case~$p=1$ simultaneously captures bit flips, where the adversary changes a the sign of an entry, and bit erasures, where the adversary changes an entry to zero. Notice that a bit flip causes~$\ell_1$-perturbation of~$2$, and a bit erasure causes~$\ell_1$-perturbation of~$1$.

%To define robustness formally, we follow the taxonomy of~
We use one of the standard definitions of robustness of a classifier $h$ at an input $\boldx$ as the smallest distance of $\boldx$ to the decision boundary~\cite{AdversarialRiskBoolean}.
%and measure the 
Formally, the \textit{prediction change robustness} (henceforth, simply \emph{robustness}) of a model~$h$ is defined as
\begin{align}\label{equation:robustnessDef}
	\bE_{\boldx}\inf\left\{ r:\exists \boldx'\in \operatorname{Ball}^p_r(\boldx), h(\boldx')\ne h(\boldx) \right\},
\end{align}
where~$\operatorname{Ball}^p_r(\boldx)$ is the set of all elements of~$\bR^n$ that are of~$\ell_p$-distance at most~$r$ from~$\boldx$. 
%For any model which classifies using a decision boundary,
%~$\cD$, 
%the infimum in~\eqref{equation:robustnessDef} reduces to~$\ell_p$-distance from the boundary. That is, \eqref{equation:robustnessDef} reduces to~$\bE_\boldx d_p(\boldx,\cD)$, and 
Note that in our setting, \eqref{equation:robustnessDef} is equivalent to the $\ell_p$-distance from the decision boundary (hyperplane), i.e.,~$\bE_\boldx d_p(\boldx,\cH(\boldw,\theta))$.
%for a linear model with a decision boundary~$\cH(\boldw,\theta)$.
%For a linear model~$h(\boldx)=\sign(\boldx\boldw^\intercal-\theta)$, this definition simplifies to
%\begin{align}\label{equation:RobustnessLin}
%	\avg_{\boldx\in\{ \pm 1 \}^n}d_p(\boldx,\cH(\boldw,\theta)).
%\end{align}
A natural goal for robustness is therefore to \emph{maximize} the expected $\ell_p$-distance to the decision boundary.
This problem will be the focus of \emph{Fourier stabilization of neurons} below.
%, which for a given input $\boldx$ and a linear classifier is

%to protect against ~$\ell_p$-attacks one would like to maximize the expected $\ell_p$-distances between the decision boundary and every possible input. However, to maintain accuracy, points should retain their relative position with respect to the decision boundary as much as possible. To facilitate this maximization in the upcoming sections, we remind the reader the well-known notion of \textit{signed distance}~$d^s_p$, defined as

\subsection{Fourier analysis of Boolean functions.}\label{section:FourierBackground}

Since subsequent sections rely on notions from Fourier analysis of Boolean functions, we provide a brief introduction. For a thorough treatment of the topic the reader if referred to~\cite{AnalysisOfBoolean}. 
Let $[n]$ denote the set $\{1,\ldots,n\}$.
Every function~$f\colon\{ \pm 1\}^n\to \bR$ can be represented as a linear combination over~$\bR$ of the functions~$\{ \chi_{\cS}(\boldx) \}_{\cS\subseteq [n]}$, where~$\chi_\cS(\boldx)=\prod_{j\in \cS}x_j$ for every~$\cS\subseteq [n]$. The coefficient of~$\chi_\cS(\boldx)$ in this linear combination is called the \textit{Fourier coefficient} of~$f$ at~$\cS$, and it is denoted by~$\Hat{f}(\cS)$. 
%The collection of all Fourier coefficients of~$f$ is called the \textit{Fourier spectrum} of~$f$. 
Each Fourier coefficient~$\Hat{f}(\cS)$ equals the inner product between~$f$ and~$\chi_\cS$, defined as~$\bE_{\boldx}f(\boldx)\chi_\cS(\boldx)$, where~$\boldx$ is chosen \textit{uniformly} at random. The inner product between functions~$f$ and~$g$ equals the inner product (in the usual sense) between their respective Fourier coefficients, a result known as Plancherel's identity: $\bE_\boldx{f(\boldx)g(\boldx)}=\sum_{\cS\subseteq [n]}\Hat{f}(\cS)\Hat{g}(\cS)$. 

For brevity, we denote~$\hat{f}(\{ i\} )=\hat{f}_i$ for every~$i\in[n]$ and~$\hat{f}_\varnothing=\hat{f}(\varnothing)$. We also define the vector~$\hat{\boldf}\triangleq (\hat{f}_1,\ldots,\hat{f}_n)$. The entries of~$\hat{\boldf}$, known as \textit{Chow parameters}, play an important role in the analysis of Boolean functions in general, and of~$\sign$ functions in particular (e.g.,~\cite{ChowParameters}). We also note that when the range of~$f$ is small (e.g. $f:\{ \pm 1 \}^n\to[-1,1]$, as in $\sigmoid$ functions), Hoeffding's inequality implies that any Fourier coefficient~$\hat{f}(\cS)$ can be efficiently approximated by choosing many~$\boldx$'s uniformly at random from~$\{\pm1 \}^n$, and averaging the expressions~$f(\boldx)\chi_\cS(\boldx)$. Finally, in the sequel we make use of the following lemma, whose proof is given in \ifarXiv Appendix~\ref{appendix:omittedProofs}\else \cite{arXivVersion}\fi.

\begin{lemma}\label{lemma:LTFfacts}
	For~$h(\boldx)=\sign(\boldx\boldw^\intercal-\theta)$ we have that $\sign(\hat{h}_i)=\sign(w_i)$ for every~$i\in[n]$.
%	the following statements hold.
%	\begin{enumerate}
%		\item[A.] $\sign(\hat{h}_i)=\sign(w_i)$ for every~$i\in[n]$.
%		\item[B.] For distinct~$i,j\in[n]$, if~$|w_i|>|w_j|$ then~$|\hat{h}_i|>|\hat{h}_j|$.
%	\end{enumerate}	
\end{lemma}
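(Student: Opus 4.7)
}
The plan is to compute $\hat{h}_i$ directly from the Fourier-analytic definition and then compare signs coordinatewise. By definition $\hat{h}_i = \bE_{\boldx}[h(\boldx)\chi_{\{i\}}(\boldx)] = \bE_{\boldx}[h(\boldx)\,x_i]$, where $\boldx$ is uniform on $\{\pm 1\}^n$. Writing $\boldx=(\boldx',x_i)$ with $\boldx'\in\{\pm1\}^{n-1}$ independent of $x_i$, I would split the expectation on the value of $x_i$ to obtain
\begin{equation*}
\hat{h}_i \;=\; \tfrac{1}{2}\,\bE_{\boldx'}\!\bigl[h(\boldx',+1)-h(\boldx',-1)\bigr].
\end{equation*}

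Next, I would argue pointwise about the sign of the integrand. Let $s(\boldx')\triangleq \sum_{j\neq i} w_j x_j' - \theta$, so that $h(\boldx',+1)=\sign(s(\boldx')+w_i)$ and $h(\boldx',-1)=\sign(s(\boldx')-w_i)$. If $w_i>0$, then $s+w_i \ge s-w_i$, so whenever the two signs differ one must have $\sign(s+w_i)=+1$ and $\sign(s-w_i)=-1$; hence the difference $h(\boldx',+1)-h(\boldx',-1)$ takes values only in $\{0,+2\}$. Symmetrically, if $w_i<0$ the difference lies in $\{0,-2\}$, and if $w_i=0$ the difference is identically $0$. Consequently every summand in the expectation has sign either $0$ or $\sign(w_i)$, giving $\sign(\hat{h}_i)\in\{0,\sign(w_i)\}$, which already settles the cases $w_i=0$ and the direction $\sign(\hat{h}_i)\neq 0 \Rightarrow \sign(\hat{h}_i)=\sign(w_i)$.

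The main obstacle will be upgrading this to the strict equality $\sign(\hat{h}_i)=\sign(w_i)$ when $w_i\neq 0$; i.e., ruling out $\hat{h}_i=0$ in that regime. This amounts to producing at least one $\boldx'\in\{\pm1\}^{n-1}$ with $|s(\boldx')|<|w_i|$, so that flipping $x_i$ actually changes $h$. I plan to handle this via a standard non-degeneracy/generic-position argument on the threshold: since the linear form $\boldx\boldw^\intercal$ takes only finitely many values on $\{\pm1\}^n$, one may assume $\theta$ is not among them (any $\theta$ can be perturbed infinitesimally without altering $h$ on $\{\pm1\}^n$), and then a short case analysis on a coordinate realizing $|s(\boldx')|<|w_i|$ yields a strictly positive contribution of the correct sign to $\hat{h}_i$. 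Under this mild convention, the strict sign equality follows and the lemma is established.
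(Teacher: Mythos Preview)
Your core argument---splitting on $x_i$ and using $s+w_i\ge s-w_i$ to conclude that each summand lies in $\{0,2\sign(w_i)\}$---is essentially the paper's: it writes $\hat{h}_i=A-B$ and observes $A\ge B$ when $w_i>0$.

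The genuine gap is in your upgrade to strict equality. You plan to move $\theta$ to generic position and then exhibit some $\boldx'\in\{\pm1\}^{n-1}$ with $|s(\boldx')|<|w_i|$. But such a $\boldx'$ need not exist, for any $\theta$. Take $n=2$, $\boldw=(3,1)$, $\theta=0$: then $h(\boldx)=\sign(3x_1+x_2)=x_1$ does not depend on $x_2$ at all, so $\hat{h}_2=0$ while $w_2=1$. For every perturbation of $\theta$ one still has $|s(\boldx')|=|3x_1-\theta|>1=|w_2|$ for all $\boldx'$, so your ``short case analysis'' cannot produce the required witness. The strict conclusion $\sign(\hat{h}_i)=\sign(w_i)$ is simply false without an additional hypothesis; perturbing $\theta$ does not help.

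The paper closes this step differently. It first recalls that an LTF is unate and that for unate Boolean functions $|\hat{h}_i|=\infl_i[h]$, so $\hat{h}_i\ne 0$ exactly when $h$ genuinely depends on coordinate~$i$. It then \emph{asserts} that $h$ depends on all its variables. That hypothesis is not written in the lemma statement but is invoked in the proof, and it is precisely what rules out examples like the one above. To match the paper, either adopt the same dependence assumption, or be content with the weak form $w_i\,\hat{h}_i\ge 0$, which is all your argument actually establishes.
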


\section{Increasing Robustness of Individual Neurons}\label{section:technical}

%As mentioned in Section~\ref{section:defOfRobustness}, to increase robustness (as defined in~\eqref{equation:robustnessDef}) one wishes 
Recall that our goal is to increase robustness, quantified as the expected distance from the decision boundary, of individual neurons.
Suppose for now that a neuron is a linear classifier $h(\boldx)=\sign(\boldx\boldw^\intercal-\theta)$.
Then, by Theorem~\ref{theorem:lp}, the distance from the decision boundary for a given input $\boldx$ is
%while maintaining accuracy as much as possible. 
\begin{align}\label{equation:signedDistance}
	d_p(\boldx,\cH(\boldw,\theta))=\frac{|\boldx\boldw^\intercal-\theta|}{\qnorm{\boldw}}=\frac{\boldx\boldw^\intercal-\theta}{\qnorm{\boldw}}\cdot h(\boldx).
\end{align}
In actuality, we wish to measure this distance with respect to \emph{all} inputs in the input space.
We can formalize this as the \emph{average} distance over the input space (which is finite, since inputs are binary), or, equivalently if $\qnorm{\boldw} = 1$, as  $\bE_{\boldx}(\boldx\boldw^\intercal-\theta)\cdot h(\boldx)$, where the expectation is with respect to the uniform distribution over inputs.\footnote{One may be concerned about the use of a uniform distribution over inputs. However, our experimental evaluation below demonstrates effectiveness for several real datasets. Additionally, we note that in some cases, a simple uniformization mechanism can be applied (see\ifarXiv~Appendix~\ref{appendix:uniform}\else~\cite{arXivVersion}\fi) as part of feature extraction.% This would be particularly meaningful in our target applications such as cybersecurity, where feature extraction is defined by the learner.
}

Now, suppose that we are given a neuron parametrized by $(\boldw,\theta)$ as input, and we wish to transform it in order to maximize its robustness---that is, average distance to the hyperplane---by choosing new weights and bias, $(\boldv,\mu)$.
We can formalize this as the following optimization problem:
\begin{tcolorbox}[colback=blue!5!white,colframe=blue!50!white,title=\textbf{The Neuron-Optimization Problem}]
	\textbf{Input:} A neuron~$h(\boldx)=\sign(\boldx\boldw^\intercal-\theta)$.
	
	\textbf{Variables:} $\boldv=(v_1,\ldots,v_n)\in\bR^n$.
	
	\textbf{Objective:} Maximize $ \bE_{\boldx} (\boldx\boldv^\intercal-\mu)h(\boldx)$.
	
	\textbf{Constraints:} 
	\begin{itemize}
		\item[--] If~$p>1$ (including~$p=\infty$): $\qnorm{\boldv}^q=1$.
		\item[--] If~$p=1$: $\infnorm{\boldv}=1$.
	\end{itemize}
\end{tcolorbox}
%Given a classifier~$h(\boldx)=\sign(\boldx\boldw^\intercal-\theta)$, this can be done by finding~$(\boldw^*,\mu)$ for which $\bE_{\boldx}d_p^s(\boldx,\cH(\boldw^*,\mu))\cdot h(\boldx)$ is \textit{maximized}. This way, the maximization penalizes the new decision boundary~$\cH(\boldw^*,\mu)$ for disagreeing with~$h$, while pushing all~$\boldx$'s away from it in the correct direction. Then, after finding~$(\boldw^*,\mu)$, replacing $h(\boldx)=\sign(\boldx\boldw^\intercal-\theta)$ by its ``stabilized'' version~$h'(\boldx)=\sign(\boldx\boldw^{*\intercal}-\mu)$ results in increased robustness to adversarial~$\ell_p$-attacks, albeit at reduced accuracy (with respect to~$h$). 

However, an issue arises in finding the optimal bias~$\mu^*$: treating~$\mu$ as an unbounded variable will result in an expression that can be made arbitrarily large by taking~$\mu$ to either~$\infty$ (if~$\sum_{\boldx}h(\boldx)>0$) or~$-\infty$ (otherwise). 
Therefore, in what follows we treat~$\mu$ as a constant, and discuss its optimal value with respect to the loss of accuracy in Section~\ref{section:accuracy}. 
%This and the above considerations 
%Hence, Theorem~\ref{theorem:lp} results in the following problem, that is referred to hereafter as \textit{the neuron-optimization problem}.

We briefly note here a connection to support vector machines (SVMs), which are based on an analogous margin maximization idea.
The key distinction is that we aim to maximize margin with respect to the \emph{entire} input space \emph{given a fixed trained model}, whereas SVM maximizes margin with respect to a given dataset in order to train a model.
Thus, our approach is about robust generalization rather than training.

%\begin{remark} 
%	The informed reader might have noticed the resemblance between our suggested technique and the famous support vector machine (SVM) framework. We emphasize two key differences: 
%	\begin{enumerate}
%		\item SVM maximizes margin with respect to the dataset, while Fourier stabilization maximizes margin with respect to the entire input space. 
%		\item SVM is inherently a training method, whereas Fourier stabilization is applied after training. Therefore, the two methods are orthogonal, and can be applied in conjunction to one another.
%	\end{enumerate}
%\end{remark}

\subsection{Fourier Stabilization of Neurons}\label{section:stabilization}
We now derive an analytic solution to the optimization problem above using Fourier analytic techniques.
Since we use a uniform distribution over~$\boldx\in\{ \pm1 \}^n$, our objective function becomes
\begin{align*}
	\bE_\boldx(\boldx\boldv^\intercal-\mu)h(\boldx)
	%&=\avg_{\boldx\in\{ \pm 1 \}^n} (\boldx\boldv^\intercal-\mu)h(\boldx)\\
	=\hat{\boldh}\boldv^\intercal-\hat{h}_\varnothing\mu,
\end{align*}
by a straightforward application of Plancherel's identity. Therefore, the optimization problem reduces to linear maximization under equality constraints. In what follows, this maximization problem is solved analytically; we emphasize once more that~$p=1$ is the focus of our attention, and yet the solution is stated in greater generality for completeness. Fourier stabilization for~$p\ne1$ is potentially useful in niche applications such as neural computation in hardware and adversarial noise in weights. We provide the proof for the case~$p=1$, and the remaining cases ($1<p\le\infty$) are discussed in\ifarXiv~Appendix~\ref{appendix:omittedProofs}\else~\cite{arXivVersion}\fi.

%, and the solutions~$\boldv$ for various values of~$p$ are given explicitly.
\begin{theorem} \label{theorem:main}
	%Let~$\tau(\boldx\boldw^\intercal-\theta)$ be a~$\sigmoid$ neuron, 
	Let~$h(\boldx)=\sign(\boldx\boldw^\intercal-\theta)$, and~$\hat{\boldh}=(\hat{h}_1,\ldots,\hat{h}_n)$. The solution~$\boldw^*=(w_1^*,\ldots,w^*_n)$ to the neuron-optimization problem is
	%vector~$\boldv=(v_1,\ldots,v_n)$ in the~$\ell_p$-Fourier stabilized neuron~$\tau(\boldx\boldv^\intercal-\mu)$, equals
	\begin{align*}
		w_i^*=
		\begin{cases}
			\sign(\hat{h}_i)\cdot\left(\frac{|\hat{h}_i|}{ \norm{\hat{\boldh} }_p }\right)^{p-1} & \mbox{if }1\le p<\infty\\
			%\sign(\hat{h}_i) & \mbox{if }p=1\\
			0 & \mbox{if }p=\infty \mbox{ and }|\hat{h}_i|< \infnorm{\hat{\boldh}}\\
			|\hat{h}_i| & \mbox{if }p=\infty \mbox{ and }|\hat{h}_i|= \infnorm{\hat{\boldh}}
			%\argmax_{i\in[n]}|\hat{h}_i|
		\end{cases}
	\end{align*}
	Further, the maximum value of the objective is~$\pnorm{\hat{\boldh}}-\hat{h}_\varnothing\mu$.
\end{theorem}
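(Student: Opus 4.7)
The plan is to collapse the expected objective into a linear functional of $\boldv$ via Plancherel's identity, and then invoke H\"older's inequality to identify the maximizer under the stated norm constraint.

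First, I would view $\boldx\boldv^\intercal - \mu$ as a function from $\{\pm1\}^n$ to $\bR$ whose Fourier expansion is explicit: the coefficient at each singleton $\{i\}$ is $v_i$, the coefficient at $\varnothing$ is $-\mu$, and all other coefficients vanish. Plancherel's identity applied to this function and to $h$ then yields
\[
\bE_\boldx(\boldx\boldv^\intercal - \mu)\,h(\boldx) \;=\; \sum_{i=1}^n v_i\,\hat{h}_i \;-\; \mu\,\hat{h}_\varnothing \;=\; \hat{\boldh}\boldv^\intercal - \hat{h}_\varnothing\mu,
\]
which is linear in $\boldv$. Because $\mu$ is treated as a constant (as discussed in the paragraph preceding the theorem), the additive term $-\hat{h}_\varnothing\mu$ plays no role in the argmax, and the problem reduces to maximizing $\hat{\boldh}\boldv^\intercal$ subject to the stated constraint on $\boldv$.

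Specializing to $p=1$, the constraint is $\infnorm{\boldv}=1$, and H\"older's inequality gives
\[
\hat{\boldh}\boldv^\intercal \;\le\; \onenorm{\hat{\boldh}}\cdot\infnorm{\boldv} \;=\; \onenorm{\hat{\boldh}},
\]
with equality iff $\sign(v_i)=\sign(\hat{h}_i)$ and $|v_i|=1$ whenever $\hat{h}_i\ne 0$. The choice $v_i^\ast = \sign(\hat{h}_i)$ is feasible and attains the bound; it matches the theorem's formula because $(|\hat{h}_i|/\onenorm{\hat{\boldh}})^{p-1}=1$ at $p=1$, and the claimed maximum value $\onenorm{\hat{\boldh}} - \hat{h}_\varnothing\mu$ follows at once.

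For completeness (and for the appendix cases), the argument generalizes verbatim with dual exponent $q = p/(p-1)$: H\"older yields $\hat{\boldh}\boldv^\intercal \le \pnorm{\hat{\boldh}}\qnorm{\boldv}$, and the equality condition forces $|v_i|^q \propto |\hat{h}_i|^p$ together with $\sign(v_i) = \sign(\hat{h}_i)$; renormalizing by the constraint $\qnorm{\boldv}^q = 1$ produces the displayed formula $v_i^\ast = \sign(\hat{h}_i)(|\hat{h}_i|/\pnorm{\hat{\boldh}})^{p-1}$, and the degenerate case $p=\infty$ concentrates mass on the coordinates attaining $\infnorm{\hat{\boldh}}$. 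I do not anticipate serious obstacles: the content is Plancherel plus a textbook H\"older argument, and the only small point needing care is the justification that the additive $\mu$-term can be treated as a constant independent of the $\boldv$-optimization, which the problem formulation already licenses.
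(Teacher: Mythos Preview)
Your argument is correct. Both you and the paper first reduce the objective to $\hat{\boldh}\boldv^\intercal - \hat{h}_\varnothing\mu$ via Plancherel, so that step is identical. From there, however, the routes diverge: the paper treats the three regimes separately---for $p=1$ it argues directly that the linear objective over the box $\{-1\le v_i\le 1\}$ is maximized at the corners, for $1<p<\infty$ it runs a Lagrange-multiplier computation to solve $\nabla_\boldv(\hat{\boldh}\boldv^\intercal-\lambda(\qnorm{\boldv}^q-1))=0$, and for $p=\infty$ it appeals to the fact that a linear functional over the $\ell_1$-ball attains its maximum at a vertex $\pm\bolde_i$. Your H\"older argument subsumes all three cases at once: the inequality $\hat{\boldh}\boldv^\intercal\le\pnorm{\hat{\boldh}}\qnorm{\boldv}$ immediately gives the upper bound, and the textbook equality condition hands you the maximizer without any calculus. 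This buys brevity and a unified presentation; the paper's approach buys explicitness (the Lagrange step displays the intermediate algebra that recovers the exponent $p-1=1/(q-1)$), which may be pedagogically useful for readers less fluent with H\"older's equality cases. Neither approach has a gap.
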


\begin{proof}[Proof for~$p=1$]
	%For~$p=1$ the constraint~$\infnorm{\boldv}=1$ is not differentiable, and hence Lagrange multipliers are inapplicable. However, 
	Notice that the constraint $\infnorm{\boldv}=1$ translates to the~$n$ constraints~$-1\le v_i\le 1$, where at least one of which must be attained with equality; this is guaranteed since the optimum of a linear function over a convex polytope is always obtained on the boundary. Hence, the optimization problem reduces to a linear objective function under box constraints. Therefore, to maximize~$\hat{\boldh}\boldv^\intercal-\hat{h}_\varnothing\mu$, it is readily verified that every~$v_i$ must be~$\sign(\hat{h}_i)$. The solution in this case is~$\boldw^*=(\sign(\hat{h}_i))_{i=1}^n$, and the resulting objective is~$\hat{\boldh}\boldv^\intercal-\hat{h}_\varnothing\mu=\onenorm{\hat{\boldh}}-\hat{h}_\varnothing\mu$. 
\end{proof}
We refer to the solution in Theorem~\ref{theorem:main} as \emph{Fourier stabilization of neurons} (or simply \emph{stabilization}), and the associated neuron as \emph{stabilized}.
If we fix~$\mu=\theta$ it is easily proved (see\ifarXiv~Appendix~\ref{appendix:omittedProofs}\else~\cite{arXivVersion}\fi) that stabilization increases robustness.
\begin{lemma}\label{lemma:stabWorks}
	For every~$h(\boldx)=\sign(\boldx\boldw^\intercal-\theta)$, its stabilized counterpart~$h'(\boldx)=\sign(\boldx\boldw^{*\intercal}-\theta)$ is at least as robust as~$h(\boldx)$. In particular:
	\begin{align*}
		\bE_\boldx d_p(\boldx,\cH(\boldw,\theta))\le \pnorm{\hat{\boldh}}-\hat{h}_\varnothing\theta\le \bE_\boldx d_p(\boldx,\cH(\boldw^*,\theta)).
	\end{align*}
\end{lemma}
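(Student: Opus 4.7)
The plan is to prove the two inequalities separately, exploiting two observations: when $\qnorm{\boldw}=1$, Plancherel's identity converts the expected distance into a simple linear function of the Chow parameters; and the value of this linear function at $\boldw^*$ is exactly the optimum $\pnorm{\hat{\boldh}}-\hat{h}_\varnothing\theta$ already established in Theorem~\ref{theorem:main}. Since both the hyperplane $\cH(\boldw,\theta)$ and the distance $d_p$ are invariant under positive rescaling of $(\boldw,\theta)$, I would first assume without loss of generality that $\qnorm{\boldw}=1$, matching the normalization under which the neuron-optimization problem is formulated; the stabilized vector $\boldw^*$ produced by Theorem~\ref{theorem:main} already satisfies $\qnorm{\boldw^*}=1$ by construction. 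Under these normalizations, \eqref{equation:signedDistance} collapses to $d_p(\boldx,\cH(\boldw,\theta)) = (\boldx\boldw^\intercal-\theta)h(\boldx)$ and analogously $d_p(\boldx,\cH(\boldw^*,\theta)) = |\boldx\boldw^{*\intercal}-\theta|$.

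For the left inequality, Plancherel's identity applied to $(\boldx\boldw^\intercal-\theta)\cdot h(\boldx)$ immediately gives $\bE_\boldx d_p(\boldx,\cH(\boldw,\theta)) = \hat{\boldh}\boldw^\intercal - \hat{h}_\varnothing\theta$. Hölder's inequality then yields $\hat{\boldh}\boldw^\intercal \le \pnorm{\hat{\boldh}}\qnorm{\boldw} = \pnorm{\hat{\boldh}}$, closing this direction; Lemma~\ref{lemma:LTFfacts} even ensures $\hat{\boldh}\boldw^\intercal\ge 0$ entrywise, so no absolute value is needed when invoking Hölder.

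For the right inequality the subtlety is that the distance to the stabilized hyperplane is governed by the \emph{new} classifier $h'$, whereas the target quantity $\pnorm{\hat{\boldh}}-\hat{h}_\varnothing\theta$ is phrased in the Fourier coefficients of the \emph{original} $h$. To bridge this, I would use the trivial pointwise bound $|\boldx\boldw^{*\intercal}-\theta| \ge (\boldx\boldw^{*\intercal}-\theta)\cdot h(\boldx)$, valid because $h(\boldx)\in\{\pm 1\}$. Taking expectations and reapplying Plancherel gives $\bE_\boldx d_p(\boldx,\cH(\boldw^*,\theta)) \ge \hat{\boldh}\boldw^{*\intercal} - \hat{h}_\varnothing\theta$, and substituting the closed form $w_i^*=\sign(\hat{h}_i)$ from Theorem~\ref{theorem:main} yields $\hat{\boldh}\boldw^{*\intercal} = \sum_i |\hat{h}_i| = \onenorm{\hat{\boldh}}$, which equals $\pnorm{\hat{\boldh}}$ for $p=1$. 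The main obstacle is precisely the $h$ vs.\ $h'$ mismatch just highlighted: the pointwise sign trick is what allows us to import the optimality guarantee of Theorem~\ref{theorem:main} (which is stated against the \emph{original} neuron) into a statement about the \emph{stabilized} decision boundary. Once that step is in place, the remainder is routine Plancherel and Hölder bookkeeping, and the argument extends uniformly to $1<p\le\infty$ via the corresponding closed forms of $\boldw^*$, which by construction satisfy $\hat{\boldh}\boldw^{*\intercal}=\pnorm{\hat{\boldh}}$.
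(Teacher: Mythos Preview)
Your proposal is correct and follows essentially the same approach as the paper. The paper also normalizes to $\qnorm{\boldw}=\qnorm{\boldw^*}=1$, obtains the left inequality from the fact that $\boldw^*$ maximizes the objective $\hat{\boldh}\boldv^\intercal-\hat{h}_\varnothing\theta$ over the unit $\ell_q$-sphere (which is precisely your H\"older step), and obtains the right inequality by splitting into the cases $h(\boldx)=h'(\boldx)$ and $h(\boldx)\ne h'(\boldx)$, which is exactly your pointwise bound $|\boldx\boldw^{*\intercal}-\theta|\ge(\boldx\boldw^{*\intercal}-\theta)h(\boldx)$ written out termwise.
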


Notice that thanks to Lemma~\ref{lemma:LTFfacts}, for~$p=1$ it is not necessary to approximate the Fourier coefficients of~$h$ since their sign is given by the sign of the respective entries of~$\boldw$. 
Notice also that in this case the resulting model is \textit{binarized}, i.e., all its weights are~$\{\pm 1\}$. Such models are popular as neurons in \textit{binarized neural networks}~\cite{BNN}, and our results shed some light on their apparent increased robustness~\cite{AttackingBNN}. %Further details are discussed in Section~\ref{section:discussion}.

Also notice that while our formal analysis pertains to~$\sign(\cdot)$, similar reasoning can be applied as a heuristic to many other activation functions, and in particular to sigmoid functions (such as~$\operatorname{logistic}(\cdot),\tanh(\cdot)$, etc.). For example, one can replace $\frac{1}{1+e^{-(\boldx\boldw^\intercal-\theta)}}$ by $\frac{1}{1+e^{-(\boldx\boldw^{*\intercal}-\theta)}}$, where~$\boldw^*$ is the solution of the neuron-optimization problem when applied over~$\sign(\boldx\boldw^\intercal-\theta)$. Since the outputs of sigmoid functions are very close to~$\pm1$ for most inputs, adversarial attacks  attempt to push these inputs towards~$\cH(\boldw,\theta)$, a task which is made harder by stabilization. Furthermore, \textit{one-sided} robustness is increased by stabilizing~$\operatorname{ReLU}(\boldx)=\max\{ 0,\boldx\boldw^\intercal-\theta \}$; $\boldx$'s for which~$\boldx\boldw^\intercal<\theta$ must be shifted across~$\cH(\boldw,\theta)$ for the output of the neuron to change. Hence, stabilizing~$\operatorname{ReLU(\cdot)}$, i.e., replacing $\max\{ 0,\boldx\boldw^\intercal-\theta \}$ by $\max\{ 0,\boldx\boldw^{*\intercal}-\theta \}$, increases the robustness of attacking such inputs.

\subsection{Bounding the Loss in Accuracy}\label{section:accuracy}

In the above discussion we optimized for robustness, but were oblivious to the loss of accuracy, and did not specify the bias~$\mu$. In this section we again focus on~$p=1$, and the remaining cases are given in\ifarXiv~Appendix~\ref{appendix:Loss}\else~\cite{arXivVersion}\fi. We now quantify the accuracy loss of a \textit{single neuron}.
%, and the loss of accuracy of the entire network is discussed in Section~\ref{section:NN}. 
Accuracy-loss of a neuron~$h(\boldx)$ is quantified in the following sense: we bound the fraction of~$\boldx$'s such that~$h(\boldx)\ne h'(\boldx)$, i.e., they are on the wrong side of the original decision boundary~$\cH(\boldw,\theta)$ due to the stabilization. The bound is given as a function of the Fourier coefficients of~$h$, and of the bias~$\mu$ that can be chosen freely. The choice of~$\mu$ manifests a robustness-accuracy tradeoff which we discuss subsequently (Corollary~\ref{corollary:RobustnessAccuracy}). Proving the bound requires the following technical lemmas. 

\begin{lemma}\label{lemma:C0}
	Let~$\ell(\boldx)=\sum_{i=1}^n a_ix_i$, with~$\sum_{i=1}^n a_i^2=1$ and~$|a_i|\le\epsilon$. If the entries of~$\boldx$ are chosen uniformly at random, then there exist a constant~$C_0\approx0.47$ such that for every~$\mu\ge0$,
	\begin{align*}
		\Pr[|\ell(\boldx)-\mu|\le u]\le u\sqrt{\frac{2}{\pi}}+2C_0\epsilon\mbox{ for every }~u>0.
	\end{align*}
%	\begin{enumerate}
%		%\item[A.] $\bE[|\ell(\boldx)-\mu|]\le E_\mu+\rho\epsilon$, where~$\rho\triangleq\frac{4\pi C_1}{3\sqrt{3}}$.
%%		\item[B.] $\Pr[|\ell(\boldx)-\mu|\le u]\le u\sqrt{\frac{2}{\pi}}+2C_0\epsilon$ for every~$u>0$.
%	\end{enumerate}
\end{lemma}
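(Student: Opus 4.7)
The statement is an anti-concentration bound for a Rademacher sum $\ell(\boldx) = \sum_{i=1}^n a_i x_i$ with unit variance ($\sum a_i^2 = 1$) and small individual weights ($|a_i| \le \epsilon$), so the natural tool is a Berry--Esseen comparison to the standard Gaussian. The plan is to show that $\ell(\boldx)$ is close in distribution to a standard normal $Z$, for which anti-concentration is trivial from the density bound $\phi(0) = 1/\sqrt{2\pi}$.

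First, I would apply the (non-identically distributed) Berry--Esseen theorem to the independent, zero-mean summands $a_i x_i$, which have variances $a_i^2$ and third absolute moments $\bE|a_i x_i|^3 = |a_i|^3$. Since $\sum a_i^2 = 1$, this yields a universal constant $C_0$ (the Berry--Esseen constant, which for symmetric summands such as the Rademacher $x_i$'s can be taken around $0.47$) such that
\begin{equation*}
    \sup_{t \in \bR} \bigl| \Pr[\ell(\boldx) \le t] - \Phi(t) \bigr| \le C_0 \sum_{i=1}^n |a_i|^3,
\end{equation*}
where $\Phi$ is the standard normal CDF. The hypothesis $|a_i| \le \epsilon$ combined with $\sum a_i^2 = 1$ gives $\sum |a_i|^3 \le \epsilon \sum a_i^2 \le \epsilon$, so the Kolmogorov distance between $\ell(\boldx)$ and $Z$ is at most $C_0 \epsilon$.

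Next, I would write
\begin{equation*}
    \Pr[|\ell(\boldx) - \mu| \le u] = \Pr[\ell(\boldx) \le \mu + u] - \Pr[\ell(\boldx) < \mu - u]
\end{equation*}
and replace each term by the corresponding Gaussian probability, each substitution costing at most $C_0 \epsilon$, for a total error of $2 C_0 \epsilon$. What remains is
\begin{equation*}
    \Phi(\mu + u) - \Phi(\mu - u) = \int_{\mu - u}^{\mu + u} \phi(z)\, dz \le 2 u \cdot \phi(0) = u \sqrt{\tfrac{2}{\pi}},
\end{equation*}
using that the standard normal density is maximized at $0$. Adding the two contributions gives the stated bound; note that the hypothesis $\mu \ge 0$ plays no essential role because the Gaussian density bound $\phi(z) \le \phi(0)$ holds everywhere.

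The only real obstacle is sourcing a Berry--Esseen variant with the quoted constant $C_0 \approx 0.47$. The classical non-identically-distributed version of Berry--Esseen (e.g.\ Shevtsova's refinement) suffices; the symmetry of the Rademacher variables $x_i$ actually permits an even smaller constant, so $0.47$ is comfortably attainable. Everything else, namely the third-moment estimate and the Gaussian density bound, is an elementary one-line calculation.
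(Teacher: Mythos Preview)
Your proof is correct and follows essentially the same route as the paper: apply Berry--Esseen to replace $\Pr[\mu-u \le \ell(\boldx) \le \mu+u]$ by the corresponding Gaussian interval probability at a cost of $2C_0\epsilon$, then bound $\int_{\mu-u}^{\mu+u}\phi(z)\,dz \le u\sqrt{2/\pi}$ via $\phi(z)\le\phi(0)$. The only cosmetic difference is that the paper invokes the interval form of Berry--Esseen directly (the version for bounded summands $|X_i|\le\epsilon$ quoted from O'Donnell's book), whereas you reach the same $2C_0\epsilon$ by subtracting two half-line CDF bounds after the estimate $\sum_i |a_i|^3 \le \epsilon \sum_i a_i^2 = \epsilon$.
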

\begin{proof}
	Notice that
	\begin{align*}
		\Pr[|\ell(\boldx)&-\mu|\le u]=\Pr[\mu-u\le\ell(\boldx)\le\mu+u]\\
		&\overset{(a)}{\le}\Pr[\mu-u\le N(0,1) \le \mu+u]+2C_0\epsilon\\
		&=\int_{\mu-u}^{\mu+u}\tfrac{1}{\sqrt{2\pi}}e^{-x^2/2}dx+2C_0\epsilon\\
		&\overset{(b)}{\le}u\sqrt{\frac{2}{\pi}}+2C_0\epsilon,
	\end{align*}
	where~$(a)$ follows from The Berry-Esseen Theorem\footnote{A parametric variant of the central limit theorem\ifarXiv; it is cited in full in Appendix~\ref{appendix:Loss}, Theorem~\ref{theorem:BEthm2}\fi.}, and~$(b)$ follows since~$e^{-x^2/2}\le 1$.
\end{proof}

\begin{lemma}\label{lemma:auxClaims}
	Let~$Z_1,\ldots,Z_n$ be independent and uniform~$\{\pm \frac{1}{\sqrt{n}}\}$ random variables, let~$\boldz=(Z_1,\ldots,Z_n)$, and let~$S=\sum_{i=1}^n Z_i$.
	\begin{enumerate}
		\item[A.] For every~$\bolda\in\{ \pm1 \}$ the random variables~$S$ and~$\bolda\boldz^\intercal$ are identically distributed.
		%we have~$\bE[|S|]=\bE[|\bolda\boldz^\intercal|]$.
		\item[B.] For every~$\mu$ we have~$\bE[|S-\mu|]=\alpha(\mu)$, where
%		For every~$\mu\in\{ \frac{-n-1}{\sqrt{n}},\frac{-n+1}{\sqrt{n}},\ldots,\frac{n-1}{\sqrt{n}},\frac{n+1}{\sqrt{n}} \}$ we have~$\bE[|S-\mu|]=\alpha(\mu)$, where
		\begin{align*}
			\alpha(\mu)=\frac{1}{2^n}\cdot\sum_{i\in\{-n,-n+2,\ldots,n\}} \binom{n}{\frac{n-i}{2}}\cdot|i\sqrt{n}-\mu|
		\end{align*}
	\end{enumerate}
\end{lemma}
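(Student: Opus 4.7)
My plan is to prove the two parts separately; both admit short and direct arguments with no heavy machinery required.

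\textbf{Part A.} The key observation is that each $Z_i$ has a symmetric distribution: $Z_i$ and $-Z_i$ are both uniform on $\{\pm 1/\sqrt{n}\}$, and hence identically distributed. Consequently, for any fixed sign $a_i\in\{\pm 1\}$ the variable $a_iZ_i$ is distributed as $Z_i$. Since the $Z_i$ are independent, coordinate-wise deterministic sign flips preserve independence, so the random vectors $(a_1Z_1,\ldots,a_nZ_n)$ and $(Z_1,\ldots,Z_n)$ have the same joint distribution. Taking sums then yields that $\bolda\boldz^\intercal$ and $S$ are identically distributed.

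\textbf{Part B.} I would evaluate $\bE[|S-\mu|]$ by direct enumeration over the $2^n$ equally likely realizations of $\boldz$. Letting $k$ denote the number of coordinates of $\boldz$ equal to $+1/\sqrt{n}$, one has $S=(2k-n)/\sqrt{n}$, and $k$ is binomial with parameters $n$ and $1/2$, so $\Pr[k=j]=\binom{n}{j}/2^n$. Therefore
\begin{align*}
\bE[|S-\mu|] = \frac{1}{2^n}\sum_{k=0}^{n} \binom{n}{k}\left|\frac{2k-n}{\sqrt{n}} - \mu\right|.
\end{align*}
The substitution $i = 2k-n$ parametrizes the possible values of $S$ by $i\in\{-n,-n+2,\ldots,n\}$ with $k=(n+i)/2$; combined with the binomial symmetry $\binom{n}{(n+i)/2}=\binom{n}{(n-i)/2}$, this recasts the sum into the form of $\alpha(\mu)$ stated in the lemma.

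\textbf{Main obstacle.} There is essentially none; both parts are elementary. The only care required is in Part~B's bookkeeping---matching the change of variables $i=2k-n$ and the binomial symmetry to the exact form of $\alpha(\mu)$---and, in Part~A, in noting that symmetry of each $Z_i$ together with independence suffices to conclude equality of the \emph{joint} distribution of the two vectors, not merely of their marginals, which is what makes the sums identically distributed.
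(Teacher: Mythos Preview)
Your proposal is correct and mirrors the paper's own proof: for Part~A the paper likewise invokes the symmetry of each $Z_i$ together with independence, and for Part~B it merely says the formula ``follows by a straightforward computation of the expectation.'' One small caveat: carrying your substitution $i=2k-n$ through yields $\lvert i/\sqrt{n}-\mu\rvert$, not the printed $\lvert i\sqrt{n}-\mu\rvert$; since $S$ takes the values $i/\sqrt{n}$ for $i\in\{-n,-n+2,\ldots,n\}$, this is evidently a typo in the stated $\alpha(\mu)$, so do not be alarmed when your computation does not literally match it.
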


\begin{proof}~
	\begin{enumerate}
		\item[A.] Since each~$Z_i$ is uniform over~$\{ \pm \frac{1}{\sqrt{n}} \}$, it follows that the random variables~$Z_i$ and~$-Z_i$ are identically distributed for every~$i$, which implies the claim since the~$Z_i$'s are independent.
		\item[B.] Follows by a straightforward computation of the expectation.\qedhere
	\end{enumerate}
\end{proof}

%To prove this theorem, recall that~$p=1$, and hence all the entries of~$\boldw^*$ are of the same magnitude. Therefore, 

%To state the bound more concisely, we normalize~$(\boldw^*,\mu)$ from Theorem~\ref{theorem:main} by~$\frac{1}{\sqrt{n}}$; this clearly does not change the robustness nor the accuracy.

% Also, we may assume that
%%by normalizing we may assume that all weights are~$\pm\frac{1}{\sqrt{n}}$, and that 
%%we may without loss of generality assume that~
%$\mu\in\{ \frac{-n-1}{\sqrt{n}},\frac{-n+1}{\sqrt{n}},\ldots,\frac{n-1}{\sqrt{n}},\frac{n+1}{\sqrt{n}} \}$; any other value of~$\mu$ can be rounded to one of these values without changing the accuracy. 

We mention that the proof of the following theorem is strongly inspired by a well-known~$p=2$ counterpart, that appeares repeatedly in the theoretical computer science literature (e.g., \cite{TestingHalfspaces} (Thm.~26, Thm.~34, Thm.~49), \cite{ChowParameters} (Thm.~8.1), and~\cite{AnalysisOfBoolean} ($\frac{2}{\pi}$-Thm.), among others).

\begin{theorem}\label{theorem:onenorm}
	For~$h(\boldx)=\sign(\boldx\boldw^\intercal-\theta)$ let~$\ell(\boldx)=\frac{1}{\sqrt{n}}\cdot\boldx\boldw^{*\intercal}$, where~$\boldw^*$ is given by Theorem~\ref{theorem:main}, and for any~$\mu$ let
	\begin{align}\label{equation:gammaDefinition}
		\gamma=\gamma(\mu)=\left|\tfrac{1}{\sqrt{n}}\onenorm{\hat{\boldh}}-\hat{h}_\varnothing\mu-\alpha(\mu)\right|,
	\end{align}
	where~$\alpha(\mu)$ is defined in Lemma~\ref{lemma:auxClaims}. Then, 
	\begin{align*}
		\Pr(\sign(\ell(\boldx)-\mu)\ne h(\boldx))\le\tfrac{3}{2}\left(\tfrac{C_0}{\sqrt{n}}+\sqrt{\tfrac{C_0^2}{n}+\sqrt{\tfrac{2}{\pi}}\cdot\gamma}\right).
	\end{align*}
\end{theorem}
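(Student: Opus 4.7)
The plan is to identify the distribution of $\ell(\boldx)$ with that of the variable $S$ in Lemma~\ref{lemma:auxClaims}, to establish the sharp equality $\bE_{\boldx}[|\ell(\boldx)-\mu|\cdot\mathbf{1}[g(\boldx)\neq h(\boldx)]] = \gamma/2$ (with $g(\boldx) := \sign(\ell(\boldx)-\mu)$), and to convert this into a probability bound via Markov's inequality together with Lemma~\ref{lemma:C0}.

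First I perform two Fourier-theoretic computations. By Plancherel's identity and the fact that $w_i^* = \sign(\hat{h}_i)$ per Theorem~\ref{theorem:main} (case $p=1$),
\begin{align*}
    \bE_{\boldx}[(\ell(\boldx)-\mu)h(\boldx)] = \tfrac{1}{\sqrt{n}}\sum_{i=1}^n w_i^*\hat{h}_i - \mu\hat{h}_\varnothing = \tfrac{1}{\sqrt{n}}\onenorm{\hat{\boldh}} - \mu\hat{h}_\varnothing.
\end{align*}
Separately, the random variables $Z_i := w_i^* x_i/\sqrt{n}$ are iid uniform on $\{\pm 1/\sqrt{n}\}$, so $\ell(\boldx) = \sum_i Z_i$ is distributed as $S$, and Lemma~\ref{lemma:auxClaims}(B) gives $\bE_{\boldx}[|\ell(\boldx)-\mu|] = \alpha(\mu)$.

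Second, the pointwise identity $(\ell-\mu)(g-h) = 2|\ell-\mu|\cdot\mathbf{1}[g\neq h]$ (verified by cases, using $(\ell-\mu)g = |\ell-\mu|$) integrates to
\begin{align*}
    2\bE[|\ell-\mu|\cdot\mathbf{1}[g\neq h]] = \alpha(\mu) - \tfrac{1}{\sqrt{n}}\onenorm{\hat{\boldh}} + \mu\hat{h}_\varnothing.
\end{align*}
Since the left side is nonnegative, the right side equals $\gamma$ (resolving the absolute value in~\eqref{equation:gammaDefinition}), and thus $\bE[|\ell-\mu|\cdot\mathbf{1}[g\neq h]] = \gamma/2$. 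Splitting at any level $u>0$, applying Markov's inequality to the tail event, and invoking Lemma~\ref{lemma:C0} with coefficients $a_i = w_i^*/\sqrt{n}$ (so $\sum a_i^2 = 1$ and $|a_i| = 1/\sqrt{n}$) then yield
\begin{align*}
    P := \Pr[g\neq h] \le u\sqrt{\tfrac{2}{\pi}} + \tfrac{2C_0}{\sqrt{n}} + \tfrac{\gamma}{2u} \qquad \text{for every } u>0.
\end{align*}

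The main obstacle is producing the precise closed form in the statement, since the naive AM--GM minimization in $u$ produces slightly different constants. The key trick is the self-referential substitution $u := P/(3\sqrt{2/\pi})$ (valid when $P>0$; the case $P=0$ is trivial), which sends $u\sqrt{2/\pi}$ to $P/3$ and $\gamma/(2u)$ to $3\sqrt{2/\pi}\gamma/(2P)$. Multiplying through by $P$ and rearranging produces the quadratic $\tfrac{4}{3}P^2 - 4\tfrac{C_0}{\sqrt{n}}P - 3\sqrt{\tfrac{2}{\pi}}\gamma \le 0$, whose larger root is exactly $\tfrac{3}{2}\bigl(\tfrac{C_0}{\sqrt{n}} + \sqrt{\tfrac{C_0^2}{n} + \sqrt{\tfrac{2}{\pi}}\gamma}\bigr)$, matching the theorem.
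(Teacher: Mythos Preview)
Your proof is correct and follows essentially the same route as the paper: the same Plancherel computation for $\bE[(\ell-\mu)h]$, the same identification of $\ell(\boldx)$ with $S$ from Lemma~\ref{lemma:auxClaims} to obtain $\bE|\ell-\mu|=\alpha(\mu)$, and the same anti-concentration input from Lemma~\ref{lemma:C0} combined with the Markov-type bound $\Pr[g\ne h,\;|\ell-\mu|>u]\le\gamma/(2u)$. The only difference is cosmetic, in the final optimization over~$u$: the paper argues by contradiction (fixing $u$ to be the positive root of $u\,\eta(u)=\gamma$ and showing $P>\tfrac{3}{2}\eta(u)$ is impossible), whereas you argue directly via the self-referential choice $u=P/(3\sqrt{2/\pi})$ and solve the resulting quadratic in $P$; both manipulations produce exactly the same bound with the same constant~$\tfrac{3}{2}$.
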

\begin{proof}
	According to Plancherel's identity, we have that
	\begin{align}\label{equation:tauOneNorm}
		\bE&[h(\boldx)(\ell(\boldx)-\mu)]=\sum_{\cS\subseteq[n],\cS\ne \varnothing}\hat{h}(\cS)\hat{\ell}(\cS)-\hat{h}_\varnothing\mu\nonumber\\
		&=\tfrac{1}{\sqrt{n}}\sum_{i=1}^n \hat{h}_i\sign(\hat{h}_i)-\hat{h}_\varnothing\mu
		=\tfrac{1}{\sqrt{n}}\onenorm{\hat{\boldh}}-\hat{h}_\varnothing\mu.
	\end{align}
	Moreover, since Lemma~\ref{lemma:auxClaims} implies that
	\begin{align}\label{eqaution:alphanA}
%		\bE[h(\boldx)(\ell(\boldx)-\mu)]\overset{(a)}{\le}  \bE[|\ell(\boldx)-\mu|]\overset{(b)}{=}\alpha(\mu),
		 \bE[|\ell(\boldx)-\mu|]=\alpha(\mu),
	\end{align}
	%where~$(a)$ is since~$h(\boldx)\le 1$, and~$(b)$ is by Lemma~\ref{lemma:auxClaims}.A and Lemma~\ref{lemma:auxClaims}.B. 
	%Therefore, by the definition of~$\gamma$, 
	we have 
	\begin{align}\label{equation:gammaUpperBoundA}
		\bE&[(\ell(\boldx)-\mu)\cdot(\sign(\ell(\boldx)-\mu)-h(\boldx))]=\nonumber\\
		&=\bE[|\ell(\boldx)-\mu|]-\bE[h(\boldx)(\ell(\boldx)-\mu)]\nonumber\\
		&\overset{\eqref{equation:tauOneNorm},\eqref{eqaution:alphanA}}{=}\alpha(\mu)-\tfrac{1}{\sqrt{n}}\onenorm{\hat{\boldh}}+\hat{h}_\varnothing\mu\le \gamma.
	\end{align}
	In what follows, we bound~$\Pr(\sign(\ell(\boldx)-\mu)\ne h(\boldx))$ by studying the expectation in~\eqref{equation:gammaUpperBoundA}. According to Lemma~\ref{lemma:C0} with~$\epsilon=\tfrac{1}{\sqrt{n}}$, it follows that for every~$u>0$ (a precise~$u$ will be given shortly)
	%	To this end, for every~$i\in[n]$
	%	%we apply Corollary~\ref{corollary:BEcor} on
	%	let~$Z_i=\sign(\hat{h}_i)X$, where~$X$ is a uniform~$\{\pm \tfrac{1}{\sqrt{n}}\}$ random variable, and notice that~$S=\sum_{i=1}^nZ_i=\frac{1}{\sqrt{n}}\ell(\boldx)$, where each~$x_i$ in~$\ell(\boldx)$ is seen as uniform~$\{ \pm 1\}$ random variable. According to Corollary~\ref{corollary:BEcor} and Lemma~\ref{lemma:auxClaims}.A, it follows that for every~$u>0$,
	\begin{align}\label{equation:probellA}
		\Pr(|\ell(\boldx)-\mu|\le u)
		%\Pr(|N(0,1)|\le u)+ \tfrac{2C_0}{\sqrt{r}}
		<u\sqrt{\tfrac{2}{\pi}}+ \tfrac{2C_0}{\sqrt{n}}\triangleq\eta(u).
	\end{align}
	%where~$(c)$ follows from Lemma~\ref{lemma:BEconclusions}.B. 
	Assume for contradiction that $\Pr(\sign(\ell(\boldx)-\mu)\ne h(\boldx))> \frac{3}{2}\eta(u)$. Since~$\Pr(|\ell(\boldx)-\mu|>u)\ge 1-\eta(u)$ by~\eqref{equation:probellA}, it follows that
	\begin{align}\label{equation:ProbBoundA}
		\Pr(\sign(\ell(\boldx)-\mu)\ne h(\boldx)\mbox{ and } |\ell(\boldx)-\mu|>u) > \tfrac{\eta(u)}{2}.
	\end{align}
	Also, observe that
	\begin{align}\label{equation:EboundA}
		\bE&[(\ell(\boldx)-\mu)(\sign(\ell(\boldx)-\mu)-h(\boldx))]=\nonumber\\
		\tfrac{1}{2^n}&\left(\sum_{\boldx\vert \sign(\ell(\boldx)-\mu)>h(\boldx)} 2(\ell(\boldx)-\mu) -\right.\nonumber\\
		&\left.\sum_{\boldx\vert \sign(\ell(\boldx)-\mu)<h(\boldx)}2(\ell(\boldx)-\mu)\right).
	\end{align}
	Since all summands in left summation in~\eqref{equation:EboundA} are positive, and all summands in the right one are negative, by keeping in the left summation only summands for which~$\ell(\boldx)-\mu> u$, and in the right summation only those for which~$\ell(\boldx)-\mu<-u$, we get
	\begin{align}\label{equation:uetauA}
		\eqref{equation:EboundA}&\ge 2u\cdot\frac{\left|\left\{ \boldx~\Big\vert~
			\begin{matrix}
				\sign(\ell(\boldx)-\mu)\ne h(\boldx)\\
				\mbox{ and }|\ell(\boldx)-\mu|>u
			\end{matrix}
			 \right\}\right|}{2^n}\nonumber\\
		&\overset{\eqref{equation:ProbBoundA}}{>}u\cdot\eta(u).
	\end{align}
	Combining~\eqref{equation:uetauA} with~\eqref{equation:gammaUpperBoundA}, it follows that $u\cdot\eta(u) < \gamma$, which by the definition in~\eqref{equation:probellA} implies that
	\begin{align}\label{equation:possibleContA}
		\sqrt{\tfrac{2}{\pi}}\cdot u^2+\tfrac{2C_0}{\sqrt{n}}\cdot u-\gamma&< 0.
	\end{align}
	We wish to find the smallest positive value of~$u$ which contradicts~\eqref{equation:possibleContA}. By applying the textbook solution, we have that any positive~$u$ which complies with~\eqref{equation:possibleContA} must satisfy
	\begin{align}\label{equation:usValueA}
		u&<%\frac{-\frac{2C_0}{\sqrt{n}}+\sqrt{\frac{4C_0^2}{n}+4\sqrt{\frac{2}{\pi}}\cdot\gamma}}{2\sqrt{\frac{2}{\pi}}}\nonumber\\
		\frac{-\frac{C_0}{\sqrt{n}}+\sqrt{\frac{C_0^2}{n}+\sqrt{\frac{2}{\pi}}\cdot\gamma}}{\sqrt{\frac{2}{\pi}}}
		%<\frac{1}{\sqrt[4]{r}}\cdot \frac{-C_0+\sqrt{C_0^2+\gamma\sqrt{2/\pi}}}{\sqrt{2/\pi}},
	\end{align}
	and hence setting~$u$ to the right hand side of~\eqref{equation:usValueA} leads to a contradiction. Therefore,
	\begin{align*}
		\Pr(\sign(\ell(\boldx)&-\mu)\ne h(\boldx))\le \tfrac{3}{2}\eta(u)\overset{\eqref{equation:probellA}}{=}\tfrac{3}{2}(u\sqrt{\tfrac{2}{\pi}}+ \tfrac{2C_0}{\sqrt{n}})\\
%		&=\tfrac{3}{2}\left(-\tfrac{C_0}{\sqrt{n}}+\sqrt{\tfrac{C_0^2}{n}+\sqrt{\tfrac{2}{\pi}}\cdot\gamma}+\tfrac{2C_0}{\sqrt{n}}\right)\\
		&=\tfrac{3}{2}\left(\tfrac{C_0}{\sqrt{n}}+\sqrt{\tfrac{C_0^2}{n}+\sqrt{\tfrac{2}{\pi}}\cdot\gamma}\right).\qedhere
		%&<\tfrac{1}{\sqrt[4]{r}}\cdot \tfrac{3}{2} \cdot\left(C_0+\sqrt{C_0^2+\gamma\sqrt{2/\pi}}\right).\qedhere
	\end{align*}
\end{proof}

\begin{corollary}\label{corollary:RobustnessAccuracy}
	Theorem~\ref{theorem:onenorm} complements Theorem~\ref{theorem:main} in terms of the robustness-accuracy tradeoff in choosing the bias~$\mu$ of the stabilized neuron. Given $h(\boldx)=\sign(\boldx\boldw^\intercal-\theta)$, choosing~$\mu=\theta$ guarantees increased robustness of the stabilized model~$h'(\boldx)=\sign(\boldx\boldw^{*\intercal}-\mu)$ by Lemma~\ref{lemma:stabWorks}, and the accuracy loss is quantified by setting\footnote{More precisely, setting~$\mu=\theta/\sqrt{n}$, due to the additional normalization factor in Theorem~\ref{theorem:onenorm}.}~$\mu=\theta$ Theorem~\ref{theorem:onenorm}. However, one is free to choose any other~$\mu\ne \theta$, and obtain different accuracy and robustness. For any such~$\mu$, the robustness of the stabilized neuron is
	\begin{align*}
		\bE_\boldx d_p(\boldx,\cH(\boldw^*,\mu))=\sum_{i=1}^n w_i^*\hat{h}'_i-\hat{h}'_\varnothing\mu
	\end{align*}
	by Plancherel's identity, and the resulting accuracy loss is given similarly by Theorem~\ref{theorem:onenorm}. In any case, the resulting accuracy and robustness should be contrasted with those of the non-stabilized model, where the accuracy loss is obviously zero, and the robustness is
	\begin{align*}
		\bE_\boldx d_p(\boldx,\cH(\boldw,\theta))=\sum_{i=1}^n w_i\hat{h}_i-\hat{h}_\varnothing\theta.
	\end{align*}
%	
%	That is, for any given~$h(\boldx)=\sign(\boldx\boldw^\intercal-\theta)$ and any~$\mu$, the robustness of the Fourier stabilized model~$h'(\boldx)=\sign(\boldx\boldw^{*\intercal}-\mu)$ 
%	%Choosing any particular~$\mu$ results in a stabilized model~$h'$ whose robustness 
%	is~$\onenorm{\hat{\boldh}}-\hat{h}_\varnothing\mu$, and its loss of accuracy with respect to~$h$ is bounded by $$\left(\tfrac{C_0}{\sqrt{n}}+\sqrt{\tfrac{C_0^2}{n}+\sqrt{\tfrac{2}{\pi}}\cdot\gamma(\mu/\sqrt{n})}\right),$$where~$\gamma$ is a simple function of~$\mu$~\eqref{equation:gammaDefinition}, and the~$\frac{1}{\sqrt{n}}$-factor is due to the normalization in Theorem~\ref{theorem:onenorm}. 
\end{corollary}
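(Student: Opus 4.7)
The plan is to assemble the statement from three ingredients that are already in hand: the distance formula of Theorem~\ref{theorem:lp}, the normalization constraint built into the neuron-optimization problem, and Plancherel's identity for Boolean functions. The corollary is really a synthesis statement, so the work is notational rather than analytic.

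First I would verify the robustness formulas. Using Theorem~\ref{theorem:lp} and rewriting via Equation~\eqref{equation:signedDistance}, for any linear classifier $h'(\boldx)=\sign(\boldx\boldw^{*\intercal}-\mu)$ one has
\begin{align*}
\bE_\boldx d_p(\boldx,\cH(\boldw^*,\mu)) = \bE_\boldx \frac{(\boldx\boldw^{*\intercal}-\mu)\,h'(\boldx)}{\qnorm{\boldw^*}}.
\end{align*}
Theorem~\ref{theorem:main} constructs $\boldw^*$ so that $\qnorm{\boldw^*}=1$ (in particular $\infnorm{\boldw^*}=1$ for $p=1$), so the denominator drops out. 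Linearity of expectation then gives $\sum_i w_i^*\,\bE_\boldx[x_i\,h'(\boldx)] - \mu\,\bE_\boldx h'(\boldx)$, and invoking the definition of Fourier coefficients, $\hat{h}'_i = \bE_\boldx[x_i\,h'(\boldx)]$ and $\hat{h}'_\varnothing = \bE_\boldx h'(\boldx)$, yields the first stated formula. The same computation applied to $h$ (under the standard normalization $\qnorm{\boldw}=1$, which does not change the hyperplane) produces the formula for the non-stabilized model.

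Next I would handle the robustness/accuracy comparison in the choice of $\mu$. The claim that $\mu=\theta$ guarantees strictly increased robustness is immediate from Lemma~\ref{lemma:stabWorks}, and the corresponding accuracy-loss bound is obtained by plugging into Theorem~\ref{theorem:onenorm}. The one subtle point, flagged in the footnote, is that Theorem~\ref{theorem:onenorm} is stated for $\ell(\boldx)=\tfrac{1}{\sqrt{n}}\boldx\boldw^{*\intercal}$; to make the hyperplane $\{\ell(\boldx)=\mu\}$ coincide with $\cH(\boldw^*,\theta)$ one must rescale the bias by $1/\sqrt{n}$, i.e.\ evaluate Theorem~\ref{theorem:onenorm} at $\mu=\theta/\sqrt{n}$. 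The freedom to pick other values of $\mu$ then traces out the trade-off: the robustness expression is linear in $\mu$ through the term $-\hat{h}'_\varnothing\mu$, while the accuracy bound depends on $\mu$ only through $\gamma(\mu)$ in Theorem~\ref{theorem:onenorm}, so moving $\mu$ away from $\theta$ trades one quantity against the other in a controllable way.

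The main ``obstacle'' is purely bookkeeping: keeping straight which normalization applies to which vector (the dual-norm constraint on $\boldw^*$ from Theorem~\ref{theorem:main} versus the $\tfrac{1}{\sqrt{n}}$ rescaling used inside Theorem~\ref{theorem:onenorm}) and which Fourier expansion pertains to $h$ versus $h'$. No new estimate is required, since both the robustness identity (Plancherel) and the accuracy tail bound (Theorem~\ref{theorem:onenorm}) have already been proved.
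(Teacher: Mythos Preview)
Your proposal is correct and matches the paper's approach: the paper does not supply a separate proof of this corollary, treating it as a direct synthesis of Lemma~\ref{lemma:stabWorks}, Theorem~\ref{theorem:main}, and Theorem~\ref{theorem:onenorm}, with the robustness identities obtained (as you do) by expanding the signed-distance expectation via Plancherel. Your handling of the normalization bookkeeping---$\qnorm{\boldw^*}=1$ from the optimization constraint and the $1/\sqrt{n}$ rescaling of the bias needed to align with Theorem~\ref{theorem:onenorm}---is exactly the content of the footnote.
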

%\begin{corollary}\label{corollary:RobustnessAccuracy}
%	Theorem~\ref{theorem:onenorm} complements Theorem~\ref{theorem:main} in terms of the robustness-accuracy tradeoff in choosing the bias~$\mu$ of the stabilized model. That is, for any given~$h(\boldx)=\sign(\boldx\boldw^\intercal-\theta)$ and any~$\mu$, the robustness of the Fourier stabilized model~$h'(\boldx)=\sign(\boldx\boldw^{*\intercal}-\mu)$ 
%	%Choosing any particular~$\mu$ results in a stabilized model~$h'$ whose robustness 
%	is~$\onenorm{\hat{\boldh}}-\hat{h}_\varnothing\mu$, and its loss of accuracy with respect to~$h$ is bounded by $$\left(\tfrac{C_0}{\sqrt{n}}+\sqrt{\tfrac{C_0^2}{n}+\sqrt{\tfrac{2}{\pi}}\cdot\gamma(\mu/\sqrt{n})}\right),$$where~$\gamma$ is a simple function of~$\mu$~\eqref{equation:gammaDefinition}, and the~$\frac{1}{\sqrt{n}}$-factor is due to the normalization in Theorem~\ref{theorem:onenorm}. 
%\end{corollary}

\section{Fourier Stabilization of Deep Neural Networks}\label{section:NN}

Thus far, we were primarily focused on robustness and accuracy of individual neurons, modeled as linear classifiers.
We now consider the problem of increasing robustness of neural networks, comprised of a collection of such neurons.
The general idea is that by stabilizing individual neurons in the network we can increase the overall robustness.
However, increased robustness comes almost inevitably at some loss in accuracy, and different neurons in a network will face a somewhat different robustness-accuracy tradeoff.
Consequently, we will now consider the problem of stabilizing a neural network by selecting a subset of neurons to stabilize that best trades off robustness and accuracy.

To formalize this idea, let $\cS$ denote the subset of neurons that are chosen for stabilization.
Define $R(\cS)$ as robustness (for example, measured empirically on a dataset using any of the standard measures) and let $A(\cS)$ be the accuracy (again, measured empirically on unperturbed data) after we stabilize the neurons in set $\cS$.
Our goal is to maximize robustness subject to a constraint that accuracy is no lower than a predefined lower bound $\beta$:
\begin{tcolorbox}[colback=blue!5!white,colframe=blue!50!white,title=\textbf{The Network-Optimization Problem}]
	\textbf{Input:} A neural network~$\mathrm{N}$ with first-layer neurons~$\{h_i(\boldx)=\sign(\boldx\boldw_i^\intercal-\theta_i)\}_{i=1}^t$, and accuracy bound~$\beta$.\\
	\textbf{Variable:} $\cS\subseteq \{1,\ldots,t\}$.\\
	\textbf{Objective:} Maximize $R(\cS)$\\ %$\operatorname{robustness}(\cS)$.\\
	\textbf{Constraint:}
	$A(\cS) \ge \beta$.
	%$\operatorname{accuracy}(\cS)\le\beta$.
\end{tcolorbox}
Observe that while in principle we can stabilize any subset of neurons, the tools we developed in Section~\ref{section:stabilization} apply only to neurons with binary inputs, which is, in general, only true of the neurons in the first (hidden) layer of the neural network.
Consequently, both the formulation above, and experiments below, focus on stabilizing a subset of the first-layer neurons.
%In this problem, $\operatorname{robustness}(\cS)$ is the robustness of~$\mathrm{N}$ after stabilizing~$\{h_s\}_{s\in\cS}$, and~$\operatorname{accuracy}(\cS)$ is the fraction of inputs on which the stabilized~$\mathrm{N}$ and non-stabilized~$\mathrm{N}$ disagree. 

There are two principal challenges in solving the optimization problem above.
First, it is a combinatorial optimization problem in which neither $R(\cS)$ nor $A(\cS)$ are guaranteed to have any particular structure (e.g., they are not even necessarily monotone).
Second, using empirical robustness $R(\cS)$ is typically impractical, as computing $\ell_1$ adversarial perturbations on binary inputs is itself a difficult combinatorial optimization problem for which even heuristic solutions are slow~\cite{jsma}.
%Indeed, even accuracy $A(S)$, while significantly faster to compute than robustness, still requires non-trivial time.

To address the first issue, we propose two algorithms. 
%the following 
The first is \emph{Greedy Marginal Benefit per Unit Cost (GMBC)} algorithm.
Define $\Delta A(j|\cS) = A(\cS) - A(\cS\cup \{j\})$ for any set of stabilized neurons $\cS$; this is the marginal decrease in accuracy from stabilizing a neuron $j$ in addition to those in~$\cS$.
Similarly, define $\Delta R(j|\cS) = R(\cS \cup \{j\}) - R(\cS)$, the marginal increase in robustness from stabilizing $j$.
We can greedily choose neurons to stabilize in decreasing order of $\frac{\Delta R(j|\cS)}{\Delta A(j|\cS)}$, until the accuracy ``budget" is saturated (that is, as long as accuracy stays above the bound $\beta$).
A second alternative algorithm we propose is \emph{Greedy Marginal Benefit (GMB)}, which stabilizes neurons solely in the order of $\Delta R(j|\cS)$.
If $A(S)$ is monotone decreasing in the number of neurons, we can show that \emph{GMB} requires only a logarithmic number of accuracy evaluations (see\ifarXiv Appendix~\ref{S:fastgmb}\else\cite{arXivVersion}\fi).
In practice, we can also run both in parallel and choose the better solution of the two; indeed, if $R(\cS)$ and $A(\cS)$ are both monotone and submodular, with $A(\cS)$ having some additional structure, the resulting algorithm exhibits a known approximation guarantee~\cite{Zhang16}.
However, we must be careful since in fact $A(\cS)$ is not necessarily monotone, and consequently $\Delta A(j|\cS)$ can be negative.
To address this, we maintain a positive lower bound $\bar{a}$ on this quantity, and if $\Delta A(j|\cS) < \bar{a}$ (including if it is negative), we simply set it to $\bar{a}$.

To address the second issue, we propose using an analytic proxy for $R(\cS)$, defining it as the sum of the increase in robustness from stabilizing the individual neurons in $\cS$ (see Section~\ref{section:stabilization}).

\section{Experiments}\label{section:experiments}

\begin{figure*}[t]
	\centering
	\begin{subfigure}{0.31\textwidth}
		\includegraphics[width=\textwidth]{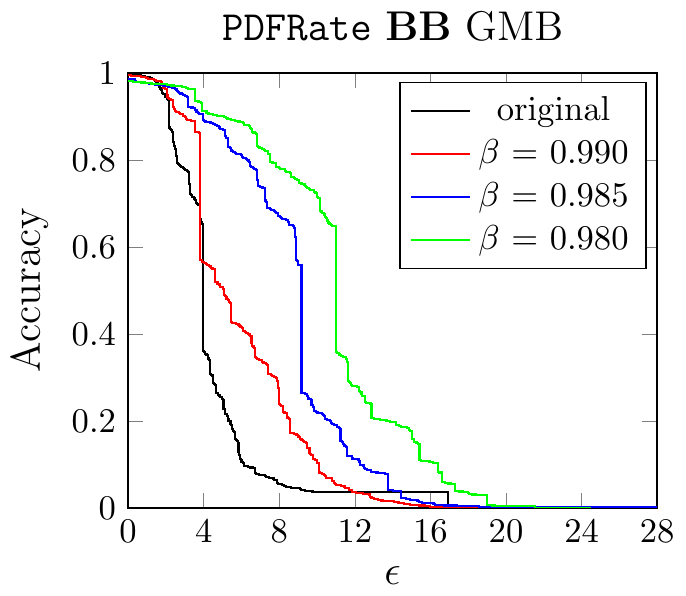}
	\end{subfigure}
	\begin{subfigure}{0.31\textwidth}
		\includegraphics[width=\textwidth]{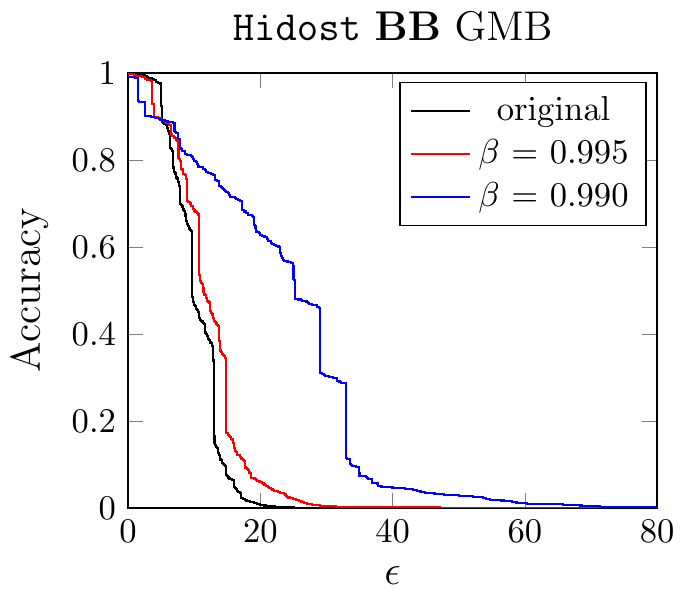}
	\end{subfigure}
	\begin{subfigure}{0.31\textwidth}
		\includegraphics[width=\textwidth]{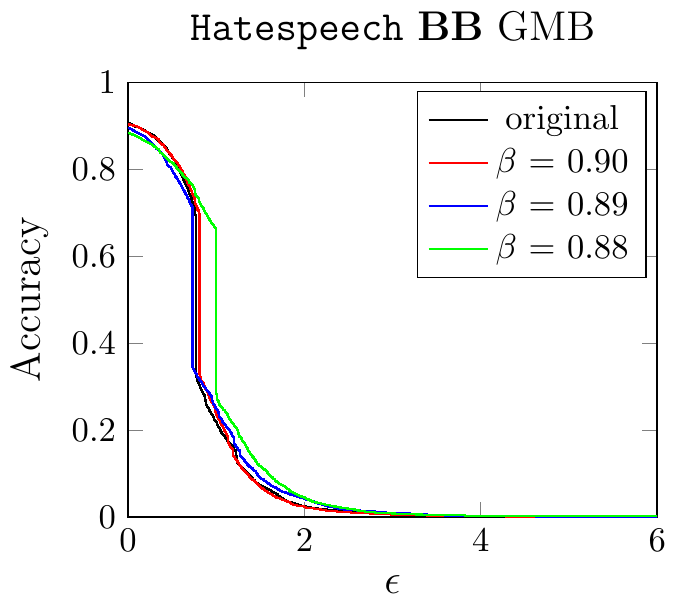}
	\end{subfigure}
	\begin{subfigure}{0.31\textwidth}
		\includegraphics[width=\textwidth]{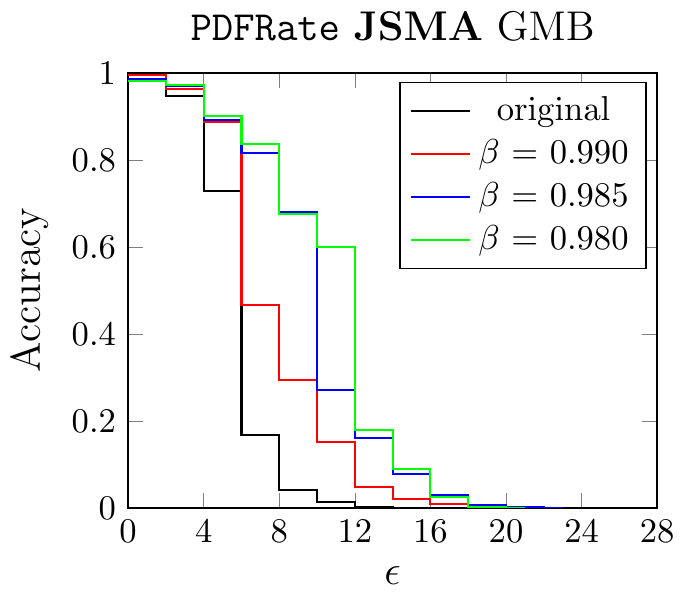}
	\end{subfigure}
	\begin{subfigure}{0.31\textwidth}
		\includegraphics[width=\textwidth]{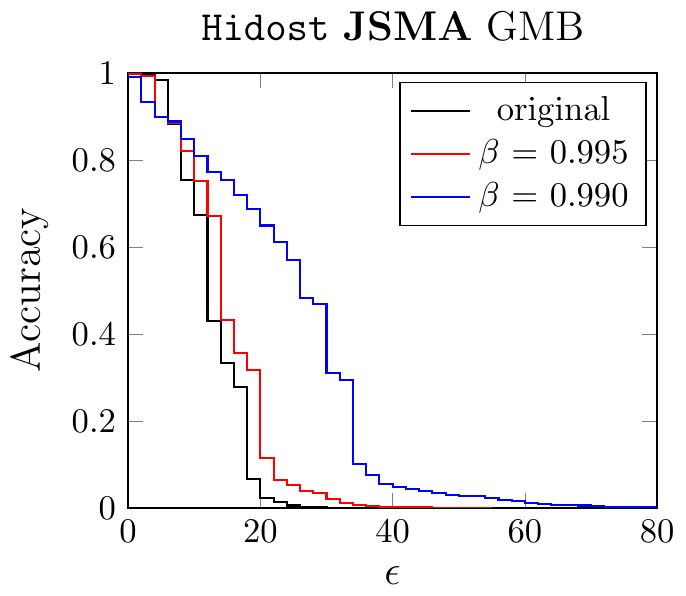}
	\end{subfigure}
	\begin{subfigure}{0.31\textwidth}
		\includegraphics[width=\textwidth]{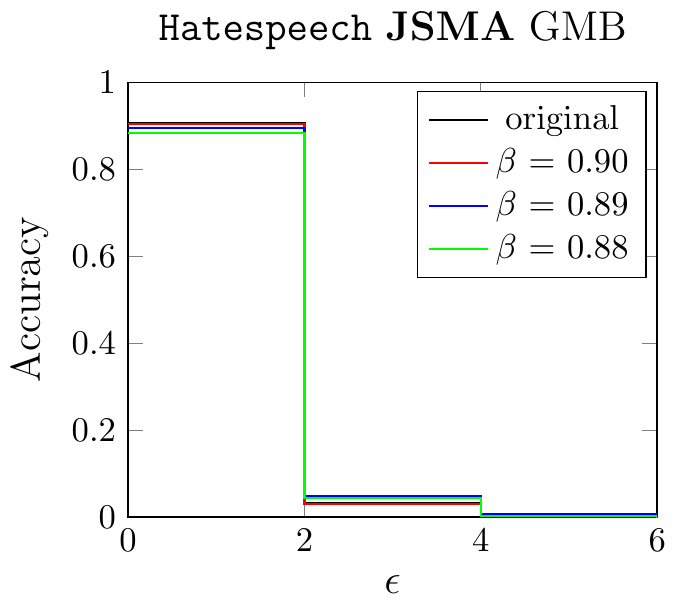}
	\end{subfigure}
	\caption{Robustness of original and stabilized neural network models (using \emph{GMB}) on PDFRate, Hidost, and Hate Speech datasets (columns) against the BB (top row) and JSMA (bottom row) attacks.  The $x$-axis shows varying levels of $\ell_1$ perturbation bound $\epsilon$ for the attacks.}
	\label{F:stabilize_base}
\end{figure*}

%\subsection{Experiment Setup}

%Experiments were run using \red{a quantum computer with 2.4Gqb RAM}.
%\paragraph{Computing Infrastructure}

\paragraph{Datasets and Computing Infrastructure}
We evaluated the proposed approach using three security-related datasets: \emph{PDFRate}, \emph{Hidost}, and \emph{Hate Speech}.
The \emph{PDFRate} dataset~\cite{Tong35} is a PDF malware dataset which extracts features based on PDF file metadata and content.
The metadata features include the size of a file, author name, and creation date, while content-based features include position and counts of specific keywords.
This dataset includes 135 total features, which are then binarized if not already binary.
The \emph{Hidost} dataset~\cite{Tong38} is a PDF malware dataset which extracts features based on the logical structure of a PDF document.
Specifically, each binary feature corresponds to the presence of a particular \emph{structural path}, which is a sequence of edges in the reduced (tree) \emph{logical structure}, starting from the catalog dictionary and ending at this object (i.e., the shortest reference path to a PDF object).
This dataset is comprised of 658,763 PDF files and 961 features.

The \emph{Hate Speech} dataset~\cite{HateSpeech19}, collected from Gab, is comprised of conversation segments, with hate speech labels collected from Amazon Mechanical Turk workers.  This dataset contains 33,776 posts, and we used a bag-of-words binary representation with 200 most commonly occurring words (not including stop words).

All datasets were divided into training, validation, and test subsets; the former two were used for training and parameter tuning, while all the results below are using the test data.
We also used the validation set to select the subset of neurons $\cS$ to be stabilized.
For each dataset, we learned a two-layer sigmoidal fully connected neural network as a baseline.
Experiments were run on a research computer cluster with over 2,500 CPUs and 60 GPUs.

\paragraph{Attacks}
The robustness-accuracy tradeoff is quantified by the success rate of two state-of-the-art attacks, JSMA and~$\ell_1$-BB, under limited budget. \textit{Jacobian-based Saliency Map Attack} (JSMA)~\cite{jsma} (naturally adapted to the~$\{ \pm 1\}$ domain rather than~$\{0,1\}$), employs a greedy heuristic by which the bit with the highest impact is flipped. \textit{$\ell_1$ Brendel \& Bethge} ($\ell_1$-BB)~\cite{bb} is an attack that allows non-binary perturbations. It is radically different from JSMA in the sense that it requires an already-adversarial starting point which is then optimized. Given a clean point to attack, we select the adversarial starting point as the closest to it in~$\ell_1$-distance, among all points in the training set.

% i.e., a limited-budget attack is applied, and the 
% That is, robustness is measured by a limited~$\ell_1$-budget given to the attacker, and the resulting accuracy is measured by the success rate of the attack on a test set. 

\paragraph{Adversarial Training}
In addition to the conventional baseline above, we also evaluated the use of neural network stabilization after adversarial training (\emph{AT})~\cite{Eugene}, which is still a state-of-the-art general-purpose approach for defense against adversarial example attacks.
We performed AT with the JSMA attack ($\ell_1$-norm $\epsilon=20$), which we adapted as follows: instead of minimizing the number of perturbed features to cause misclassification, we maximize loss subject to a constraint that we change at most $\epsilon$ features, still choosing which features to flip in the sorted order produced by JSMA.

\begin{figure*}[h]
	\centering
	\begin{subfigure}{0.31\textwidth}
		\includegraphics[width=\textwidth]{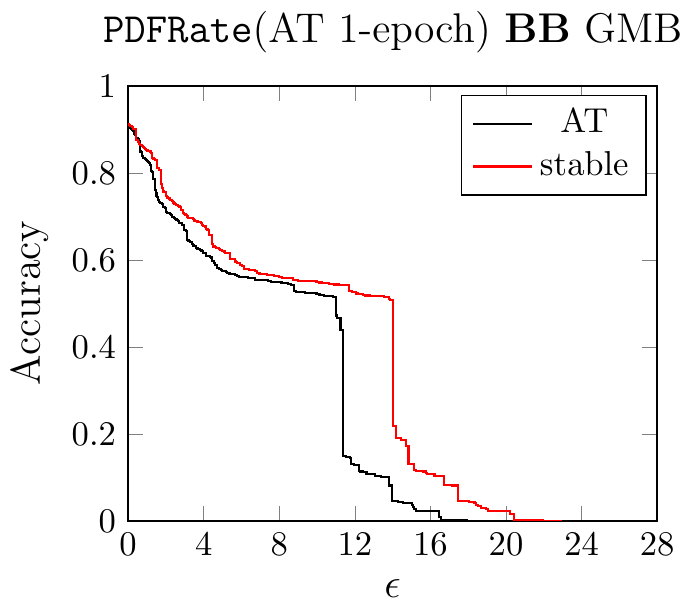}
	\end{subfigure}
	\begin{subfigure}{0.31\textwidth}
		\includegraphics[width=\textwidth]{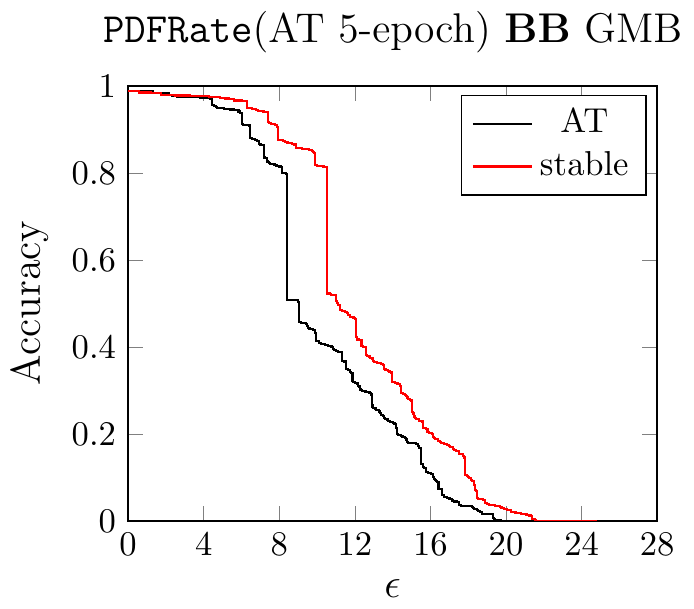}
	\end{subfigure}
	\begin{subfigure}{0.31\textwidth}
		\includegraphics[width=\textwidth]{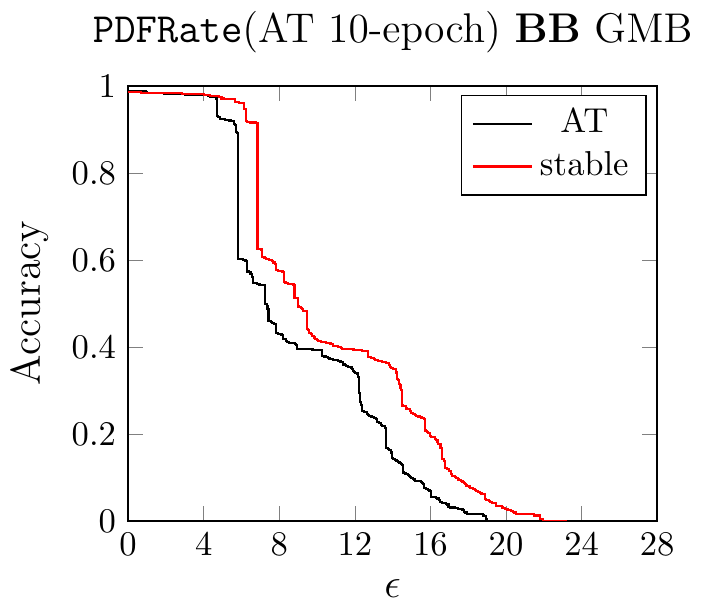}
	\end{subfigure}
	\begin{subfigure}{0.31\textwidth}
		\includegraphics[width=\textwidth]{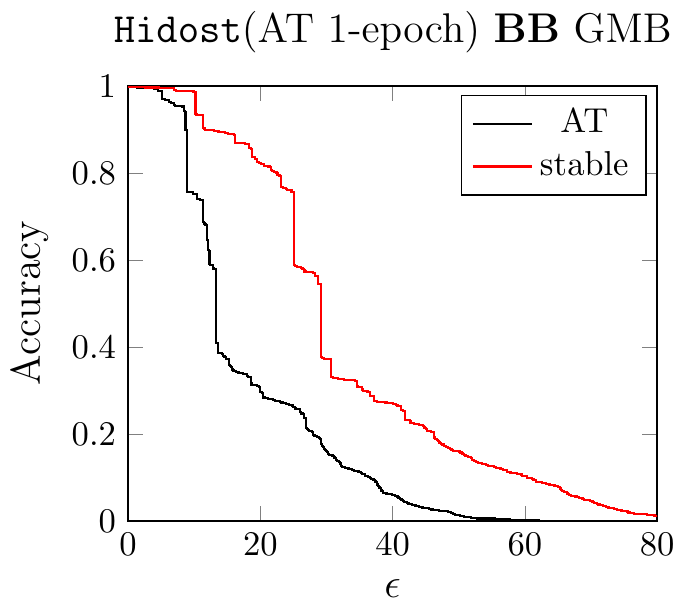}
	\end{subfigure}
	\begin{subfigure}{0.31\textwidth}
		\includegraphics[width=\textwidth]{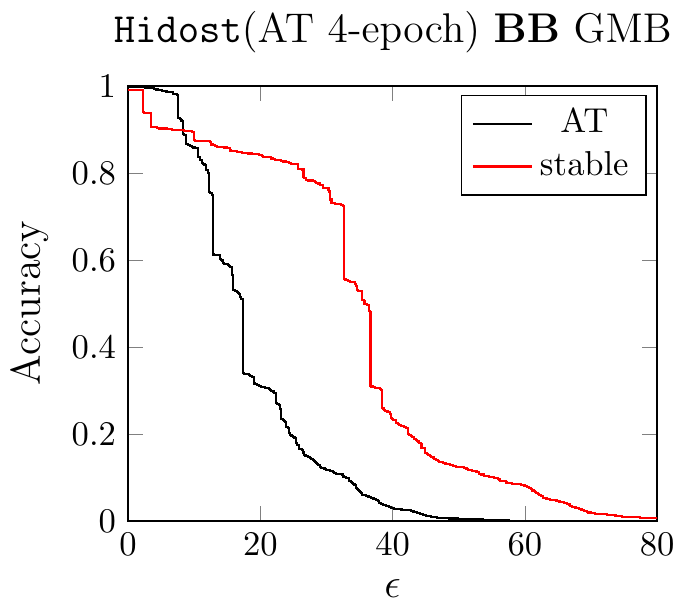}
	\end{subfigure}
	\caption{Robustness of adversarially trained neural networks and their stabilized variants (using \emph{GMB}). Top row: PDFRate dataset, after 1, 5, and 10 epochs of adversarial training (from left to right).  Bottom row: Hidost dataset after 1 (left) and 4 (right) epochs of adversarial training. The $x$-axis shows varying levels of $\ell_1$ perturbation bound $\epsilon$ for the attacks.}
	\label{F:stabilizeAT}
\end{figure*}

\subsection{Effectiveness of Neural Network Stabilization}

We first evaluate the proposed Fourier stabilization approach for neural network models on neural networks trained in a regular way on the \emph{PDFRate}, \emph{Hidost}, and \emph{Hate Speech} datasets.
The results are shown in Figure~\ref{F:stabilize_base} for the \emph{GMB} algorithm, where the top three plots (one for each dataset) are for the BB attack, and the bottom three are for the JSMA attack; results for \emph{GMBC} are provided in the supplement.
The most significant impact on robustness is in the case of the PDFRate dataset, where an essentially negligible drop in accuracy is accompanied by a substantial increase in robustness.
For example, for BB attack $\ell_1$ perturbation of at most $\epsilon = 10$ (the $x$-axis), robust accuracy ($y$-axis) increases from nearly 0 to 70\%, while clean data accuracy is 0.98.
We can observe a similar impact for the JSMA attack, with robust accuracy increasing from 0 to 60\%.
Fourier stabilization has a similarly substantial impact on the Hidost data: with accuracy still at 99\%, robust accuracy is increased from nearly 0 to 60\% for both the BB and JSMA attacks.
On the other hand, the impact is markedly small on the Hate Speech data, although even here we  see an increase in robust accuracy for BB attacks on the stabilized version for $\beta = 0.88$ and $\epsilon = 1$ from 30\% (baseline) to nearly 70\% (Fourier stabilization).

\subsection{Stabilizing Adversarially Trained Models}

In addition to demonstrating the value of stabilization for regularly trained neural networks (for example, when adversarial training is not an option, such as when datasets on which the original model was trained are sensitive), we now show that the approach also effectively composes with adversarial training (AT).
Figure~\ref{F:stabilizeAT} presents the results of stabilization (using \emph{GMB}; see the supplement for \emph{GMBC}) performed after several epochs of AT.
%In all cases we can see some improvement in robustness compared to the AT models, although these tend to be smaller than for the baseline models above.
In all cases we see some improvement, and in a number of them the improvement over AT is considerable.
For example, on the Hidost dataset after 4 epochs of AT, robust accuracy is considerably improved by AT compared to the original model in Figure~\ref{F:stabilize_base}, but then further improved significantly by the proposed stabilization approach. For example, for $\epsilon=24$, robust accuracy increases from approximately 20\% to 80\%.

\section{Discussion}\label{section:discussion}
We introduced Fourier stabilization, a harmonic-analysis inspired post-training defense against adversarial perturbations of randomly chosen binary inputs. %While our theoretical contribution is restricted to uniformly random inputs, our experiments suggest the efficacy of Fourier stabilization on real-world datasets. 
It is natural to consider extensions of this work in several fronts, e.g., worst-case robustness, non-uniform binary inputs, and real-valued inputs. In worst-case robustness, correct computation is required for \textit{every} input, i.e., $\bE_\boldx$ in~\eqref{equation:robustnessDef} is replaced by~$\min_{\boldx}$. While average-case robustness is more suited for applications such as malware detection, worst-case robustness is relevant in critical applications such as neuromorphic computing. It was recently shown in~\citet{CodNNarXiv} that worst-case robustness is impossible even against one bit erasure (i.e., setting~$x_i=0$ for some~$i$), unless redundancy is added, and a simple methods of adding such redundancy was given.

Extensions for non-uniform-binary or real-valued inputs require developing new tools in harmonic analysis. In the binary case, one needs to study the coefficients which come up instead of the Fourier ones, and if Plancherel's identity holds. In the real-valued case, e.g., when the inputs are distributed by a multivariate Gaussian, \textit{Hermite coefficients} can be used similarly, see \cite{AnalysisOfBoolean}, Sec.~11.2. However, in this case every neuron is already stabilized (see~\cite{TestingHalfspaces}, Prop.~25.2), and hence we suggest to consider other input distributions that are common in the literature, such as Gaussian mixture, and study the resulting coefficients.

\section*{Acknowledgements}
Y.~Vorobeychik was supported in part by NSF grants no. IIS-1905558 and ECCS-2020289, and ARO grant no. W911NF1910241.

\bibliography{ICML}
\bibliographystyle{icml2021}
\clearpage
\appendix
\section*{Supplement to \emph{Enhancing Robustness of Neural Networks through Fourier Stabilization}}

\section{Omitted Proofs}\label{appendix:omittedProofs}

%\subsection{Fourier Stabilization}

\begin{proof}[Proof of Lemma~\ref{lemma:LTFfacts}]
	We begin by introducing the notion of \textit{influences}~\cite{AnalysisOfBoolean} (Def.~2.13). The influence of coordinate~$i\in[n]$ is~$\infl_i[h]=\Pr[h(\boldx)\ne h(\boldx^{\oplus i})]$, where~$\boldx\in\{ \pm1 \}^n$ is chosen uniformly at random, and~$\boldx^{\oplus i}$ equals~$\boldx$ with its~$i$'th coordinate flipped. According to~\cite{AnalysisOfBoolean} (Ex.~2.5), we have that~$\infl_i[h]=|\hat{h}_i|$ for every~$i$ since~$h$ is unate\footnote{A Boolean function~$f:\{ \pm1  \}^n\to\{ \pm1 \}$ is called unate if it is monotone or anti-monotone in all~$n$ coordinates. The function~$f$ is monotone in coordinate~$i$ if $f(\boldx)\le f(\boldx^{\oplus i})$ for every~$\boldx$ such that~$x_i=-1$. Similarly, it is anti-monotone in coordinate~$i$ if $f(\boldx)\ge f(\boldx^{\oplus i})$ for every~$\boldx$ such that~$x_i=-1$. It is readily verified that every~$\sign$ function is unate.}. Therefore, for every~$i\in[n]$, we have that~$h$ depends on~$x_i$ if and only if~$\hat{h}_i\ne 0$. Now, observe that
	\begin{align*}
		\hat{h}_i=\;&\bE[x_ih(\boldx)]=\underbrace{\sum_{\boldx\vert x_i=1}\sign\left(\sum_{j\ne i}w_jx_j-(\theta-w_i)\right)}_{\triangleq A}\\
		&-\underbrace{\sum_{\boldx\vert x_i=-1}\sign\left(\sum_{j\ne i}w_jx_j-(\theta+w_i)\right)}_{\triangleq B},
	\end{align*}
	and hence, if~$w_i>0$, then $\theta+w_i>\theta-w_i$, and hence~$A\ge B$ and~$\hat{h}_i\ge 0$. Similarly, if~$w_i<0$, it follows that~$\hat{h}_i\le 0$. Since~$h$ depends on all its variables it follows that~$\hat{h}\ne 0$, and the claim follows.
\end{proof}

\begin{proof}[Proof of Lemma~\ref{lemma:stabWorks}]
	For simplicity, assume that~$\qnorm{\boldw}=\qnorm{\boldw^*}=1$; this can be assumed since scaling the weights (including the bias) does not change the accuracy nor the robustness. Also, let~$d_p^s$ be the signed variant of~$d_p$, i.e.,
	\begin{align*}
	    d_p^s(\boldx,\cH(\boldw,\theta))=\frac{\boldx\boldw^\intercal-\theta}{\qnorm{\boldw}}.
	\end{align*}
	According to Theorem~\ref{theorem:lp}, and by the definition of robustness~\eqref{equation:robustnessDef} and of signed distance, it follows that
	\begin{align*}
		\bE_{\boldx}&d_p(\boldx,\cH(\boldw,\theta))=\bE_{\boldx}d_p^s(\boldx,\cH(\boldw,\theta))h(\boldx)\\
		&\overset{(a)}{\le} \bE_{\boldx}d_p^s(\boldx,\cH(\boldw^*,\theta)) h(\boldx)\\
		&=\sum_{\boldx\vert h(\boldx)=h'(\boldx)}\Pr(\boldx)d_p^s(\boldx,\cH(\boldw^*,\theta)) h'(\boldx)-\\
		&\phantom{==}\sum_{\boldx\vert h(\boldx)\ne h'(\boldx)}\Pr(\boldx)d_p^s(\boldx,\cH(\boldw^*,\theta)) h'(\boldx)\\
		&=\sum_{\boldx\vert h(\boldx)=h'(\boldx)}\Pr(\boldx)d_p(\boldx,\cH(\boldw^*,\theta)) -\\
		&\phantom{==}\sum_{\boldx\vert h(\boldx)\ne h'(\boldx)}\Pr(\boldx)d_p(\boldx,\cH(\boldw^*,\theta)) \\
		&\overset{(b)}{\le}\bE_{\boldx}d_p(\boldx,\cH(\boldw^*,\theta)),
	\end{align*}
	where~$(a)$ follows from~$\boldw^*$ being the maximizer of the corresponding expression, and~$(b)$ follows from the positivity of distance. The ``in particular'' part follows from Theorem~\ref{theorem:main} since the expression after~$(a)$ is the objective function of the optimization problem, evaluated at its maximizer~$\boldw^*$, which results in~$\pnorm{\hat{\boldh}}-\hat{h}_\varnothing\theta$.
\end{proof}

\begin{proof} [Proof of Theorem~\ref{theorem:main}]
	The proof is split to the cases~$1<p<\infty$ and~$p=\infty$.
	
	\noindent\underline{The case~$1<p<\infty$:} Since the objective function and the constraint are differentiable, we use Lagrange multipliers. Define an additional variable~$\lambda$, and let
	\begin{align*}
		\ell(\boldv,\lambda)&=\hat{\boldh}\boldv^\intercal-\hat{h}_\varnothing\mu-\lambda(\norm{\boldv}_q^q-1)
		%\\&=\hat{\boldh}\boldv^\intercal-\hat{h}_\varnothing\mu-\lambda\left(\sum_{i=1}^n|v_i|^q-1\right).
	\end{align*}
	
	To find the extrema of~$\ell(\boldv,\lambda)$, we compute its gradient\footnote{Since~$1<p<\infty$, it follows that~$1<q<\infty$, and hence the function~$|x|^{q}$ is differentiable everywhere (including~$x=0$), and its derivative is~$qx|x|^{q-2}$.} with respect to derivation by~$(v_1,\ldots,v_n,\lambda)$,
	\begin{align*}
		\nabla_{\boldv,\lambda}&\ell(\boldv,\lambda)=(\hat{\boldh},0)-\\
		&\phantom{=}(\lambda q v_1|v_1|^{q-2},\ldots,\lambda q v_n|v_n|^{q-2},\norm{\boldv}_q^q-1)=0.
	\end{align*}
	and hence $\hat{h}_i=\lambda q v_i |v_i|^{q-2}=\lambda q\cdot \sign(v_i)\cdot|v_i|^{q-1}$ for every~$i\in[n]$. 
	%\footnote{Notice that~$1\le\frac{1}{q-1}=p-1<\infty$, and therefore, since~$p\in\{2,3,\ldots\}$, there is no issue with roots of potentially-negative numbers.}
	Since the maximizer~$\boldw^*$ of~$\hat{\boldh}\boldv^\intercal$ clearly satisfies~$\sign(\hat{h}_i)=\sign(w_i^*)$ for every~$i\in[n]$, it follows that~$|\hat{h}_i|=\lambda q\cdot|w^*_i|^{q-1}$, i.e.,~$|w^*_i|=(|\hat{h}_i|/\lambda q)^{1/(q-1)}$. 
	%$v_i=(\hat{h}_i/\lambda q)^{1/(q-1)}$ for every~$i\in[n]$. 
	By plugging this into~$\norm{\boldv}_q^q-1=0$,  if~$\lambda\ne 0$ then 
	\begin{align*}
		\sum_{i=1}^n \left(\frac{|\hat{h}_i|}{\lambda q}\right)^{\frac{q}{q-1}}&=1\\
		\lambda^{\frac{q}{q-1}}&=\sum_{i=1}^n \left(\frac{|\hat{h}_i|}{q}\right)^{\frac{q}{q-1}},
	\end{align*}
	and therefore
	\begin{align*}
		\lambda &= \left( \sum_{i=1}^n \left(\frac{|\hat{h}_i|}{q}\right)^{\frac{q}{q-1}} \right)^{\frac{q-1}{q}}=\left( \sum_{i=1}^n \left(\frac{|\hat{h}_i|}{q}\right)^{p} \right)^{\frac{1}{p}}\\
		&=\frac{1}{q}\left( \sum_{i=1}^n |\hat{h}_i|^p \right)^{\frac{1}{p}}=\frac{\norm{\hat{\boldh}}_p}{q}.
	\end{align*}
	Hence, the solution satisfies
	\begin{align}\label{equation:viabs}
		|w^*_i|&= \left( \frac{|\hat{h}_i|}{ \frac{\norm{\hat{\boldh}}_p}{q} \cdot q} \right)^{\frac{1}{q-1}}= \left( \frac{|\hat{h}_i|}{ \norm{\hat{\boldh} }_p }\right)^{ \frac{1}{q-1} }\nonumber\\
		&=\left( \frac{|\hat{h}_i|}{ \norm{\hat{\boldh} }_p }\right)^{p-1}.
	\end{align}
	Again, since~$\sign(w^*_i)=\sign(\hat{h}_i)$ for every~$i\in[n]$, it follows from~\eqref{equation:viabs} that $w^*_i=\sign(\hat{h}_i)(|\hat{h}_i|/\pnorm{\hat{\boldh}})^{p-1}$. If~$\lambda=0$ then~$\hat{\boldh}=0$, and then~$h$ must be constant\footnote{The famous Chow theorem~\cite{AnalysisOfBoolean} (Thm.~5.1) states that~$\sign$ functions (also known as \textit{Linear Threshold Functions}) are uniquely determined by their Chow parameters (see Section~\ref{section:FourierBackground}). Therefore, since the function~$c(\boldx)=1$ clearly has~$\hat{\boldc}=0$, it follows that~$h(\boldx)=c(\boldx)=1$.}. Finally, the resulting objective can be easily computed.
	
	\noindent\underline{The case~$p=\infty$:} For~$p=\infty$ the constraint~$\onenorm{\boldv}=1$ is not differentiable. However, notice that~$\onenorm{\boldv}\le 1$ defines a convex polytope whose vertices are~$\{ \pm \bolde_i \}_{i=1}^n$, where~$\bolde_i$ is the~$i$'th unit vector. Similar to the case~$p=1$, it is known that the optimum of a linear function over a convex polytope is obtained at a vertex. Therefore, it is readily verified that the solution is~$\boldw^*=\sign(\hat{h}_{i_{\text{max}}})\bolde_{i_\text{max}}$, where~$i_{\text{max}}\triangleq \argmax_{i\in[n]}|\hat{h}_i|$, for which the resulting objective is~$\hat{\boldh}\boldv^\intercal-\hat{h}_\varnothing\mu=\infnorm{\hat{\boldh}}-\hat{h}_\varnothing\mu$. %Notice that in this case, the only influential variable is~$x_{i_\text{max}}$, which is likely to result in drastic loss of accuracy. Alternative sub-optimal solutions with better accuracy are discussed in Appendix~B. Moreover, thanks to Lemma~\ref{lemma:LTFfacts}.B, in this case too it is not necessary to approximate the Fourier coefficients, and~$i_\text{max}$ is the index of the largest entry of~$\boldw$ in absolute value.
\end{proof}
\section{Uniform and Binary Feature Extraction}\label{appendix:uniform}
As mentioned earlier, our Fourier analytic methods are applicable only in settings where the inputs presented to the adversary are binary, and uniformly distributed. While this is not a standard setting in adversarial machine learning, we point out cases in which this uniform binary distribution can be attained with little additional effort. We focus on settings where the extraction of features from real-world instances is freely chosen by the learner, such as in cybersecurity. Furthermore, it has been demonstrated in the past~\cite{tong2019improving} that binarization of features is beneficial to several applications in cybersecurity, which all the more correlates with our techniques. 

Consider a setting of defending against adversarial evasion attacks, in which the learner begins by extracting features from malicious and benign instances. Since the extraction of features from instances is up to the learner to decide, one can imagine every instance as a (potentially infinite) vector over the reals, out of which the learner focuses on a finitely many. Therefore, the instance space can be seen as~$\bR^n$ for some integer~$n$, where instances are sampled according to jointly Gaussian vector~$X$. 

To extract binary and uniform features from~$X$, we begin by calculating its covariance matrix~$\boldC=\bE[X^\intercal X]$; if not known a priori it can be approximated from the data. Then, finding the diagonalization~$\boldC=\boldU \boldD \boldU^\intercal$, where~$\boldD$ is diagonal and~$\boldU$ is unitary, allows us to decorrelate the features---it is an easy exercise to verify that the entries of~$X\boldU$ are uncorrelated. Finally, we binarize~$X\boldU$ by thresholding on the mean of its individual entries:
\begin{align*}
	bin(X\boldU)_j=\begin{cases}
		\phantom{-}1&\mbox{if }(X\boldU)_j\ge \bE[(X\boldU)_j]\\
		-1&\mbox{if }(X\boldU)_j< \bE[(X\boldU)_j]
	\end{cases}.
\end{align*}
It is readily verified that the distribution~$bin(X\boldU)$ is uniform over~$\{ \pm 1 \}^n$.
\section{Loss of Accuracy for~$1<p<\infty$}\label{appendix:Loss}
In this section we extend Theorem~\ref{theorem:onenorm} to other values of~$p$. All values~$1<p<\infty$ are covered by the discussion in this section. The case~$p=\infty$, which is of lesser interest due to drastic loss of accuracy, can be obtained by a variant of the proof of Theorem~\ref{theorem:onenorm}, and the details are left to the reader. To provide a bound similar to Theorem~\ref{theorem:onenorm} for~$1<p<\infty$, the following lemma is required.

\begin{lemma}\label{lemma:C1}
	Let~$\ell(\boldx)=\sum_{i=1}^n a_ix_i$, with~$\sum_{i=1}^n a_i^2=1$ and~$|a_i|\le\epsilon$. If the entries of~$\boldx$ are chosen uniformly at random, then there exist a $C_1\approx21.82$ such that for every~$\mu\ge0$,
	$$\bE[|\ell(\boldx)-\mu|]\le E_\mu+\rho\epsilon$$
	where~$\rho\triangleq\frac{4\pi C_1}{3\sqrt{3}}$, and~$E_\mu\triangleq \bE[|N(\mu,1)|]$ is the mean of a folded Gaussian.
%	\begin{enumerate}
%		\item[A.] $\bE[|\ell(\boldx)-\mu|]\le E_\mu+\rho\epsilon$, where~$\rho\triangleq\frac{4\pi C_1}{3\sqrt{3}}$.
%		\item[B.] $\Pr[|\ell(\boldx)-\mu|\le u]\le u\sqrt{\frac{2}{\pi}}+2C_0\epsilon$ for every~$u>0$.
%	\end{enumerate}
\end{lemma}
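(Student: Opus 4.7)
The plan is to reduce the problem to a comparison of the CDFs of $S \triangleq \ell(\boldx)$ and the standard normal via a \emph{non-uniform} Berry--Esseen bound, then integrate the CDF difference against $x\mapsto|x-\mu|$ using the tail-integral representation of mean absolute deviation. Write $F_S(x)=\Pr[S\le x]$ and $\Phi(x)=\Pr[N(0,1)\le x]$. The hypothesis $\sum_i a_i^2=1$ makes $S$ a zero-mean, unit-variance sum, and $|a_i|\le\epsilon$ bounds the relevant Lyapunov ratio since $\sum_i |a_i|^3 \le \epsilon\sum_i a_i^2 = \epsilon$.

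The first ingredient is the non-uniform Berry--Esseen theorem (of Nagaev/Michel/Paditz type), which for some universal constant $C_1\approx 21.82$ yields
$$|F_S(x)-\Phi(x)| \le \frac{C_1\epsilon}{1+|x|^3}\quad\text{for every } x\in\mathbb{R}.$$
The second ingredient is the elementary identity
$$\bE[|Y-\mu|] = \int_{-\infty}^{\mu} F_Y(x)\,dx + \int_{\mu}^{\infty}(1-F_Y(x))\,dx,$$
valid for any real random variable $Y$ whose tails make the boundary terms of integration-by-parts vanish: in particular for $Y=S$ (bounded) and for $Y=N(0,1)$ (super-polynomial tail decay). Using the symmetry of the Gaussian to rewrite $E_\mu = \bE[|N(\mu,1)|] = \bE[|N(0,1)-\mu|]$ for $\mu\ge 0$, and subtracting the two instances of the identity, the comparison collapses to an integral of the CDF difference,
$$\bE[|S-\mu|] - E_\mu = \int_{-\infty}^{\mu}(F_S-\Phi)(x)\,dx - \int_{\mu}^{\infty}(F_S-\Phi)(x)\,dx.$$

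Applying the triangle inequality and the non-uniform Berry--Esseen bound termwise gives
$$\bigl|\bE[|S-\mu|]-E_\mu\bigr| \le \int_{-\infty}^{\infty}\frac{C_1\epsilon}{1+|x|^3}\,dx = 2C_1\epsilon\int_0^{\infty}\frac{dx}{1+x^3} = \frac{4\pi C_1}{3\sqrt{3}}\epsilon = \rho\epsilon,$$
where $\int_0^\infty dx/(1+x^3)=\frac{2\pi}{3\sqrt{3}}$ follows from partial fractions applied to $1+x^3=(1+x)(x^2-x+1)$. Dropping the absolute value yields the claim. The main obstacle is securing Berry--Esseen in precisely this tail-sharp form: the ordinary uniform version $\sup_x|F_S-\Phi|\le C\epsilon$ is not integrable against a constant, so the $(1+|x|^3)^{-1}$ denominator of the non-uniform version is essential---and its integral is what produces the specific constant $\rho=\frac{4\pi C_1}{3\sqrt{3}}$ quoted in the statement. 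The remaining care is in justifying the vanishing of boundary terms in the integration-by-parts step for the unbounded Gaussian, which is immediate from $\Phi(-\infty)=0$, $1-\Phi(\infty)=0$, and the Gaussian tail estimate.
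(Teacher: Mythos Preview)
Your proposal is correct and follows essentially the same approach as the paper: both express $\bE[|\ell(\boldx)-\mu|]$ via a tail/CDF integral, apply the non-uniform Berry--Esseen bound $|F_S(x)-\Phi(x)|\le C_1\epsilon/(1+|x|^3)$ pointwise, and integrate the resulting envelope over $\bR$ to obtain the constant $\rho=\tfrac{4\pi C_1}{3\sqrt{3}}$. The only cosmetic difference is that the paper uses the survival-function representation $\bE[|Y-\mu|]=\int_0^\infty\Pr[|Y-\mu|>s]\,ds$ and then substitutes $x=\mu\pm s$, whereas you work directly with $\int_{-\infty}^{\mu}F_Y+\int_{\mu}^{\infty}(1-F_Y)$; as a small bonus your version gives the two-sided estimate $|\bE[|S-\mu|]-E_\mu|\le\rho\epsilon$.
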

To prove Lemma~\ref{lemma:C1}, the following version of the Central Limit Theorem is given.

\begin{theorem} (Berry-Esseen Theorem) \label{theorem:BEthm2} \cite{AnalysisOfBoolean} (Ex.~5.16, 5.31(d))
	Let~$X_1,\ldots,X_n$ be independent random variables with~$\bE[X_i]=0$, $|X_i|\le \epsilon$, and~$\var[X_i]=\sigma_i^2$ for every~$i\in[n]$, where~$\sum_{i=1}^n\sigma_i^2=1$. Then, for~$S=\sum_{i=1}^n X_i$, for every interval~$I\subseteq \bR$, and every~$u>0$, there exist absolute constants~$C_0,C_1$ such that
	\begin{align*}
		|\Pr[S\in I]-\Pr[N(0,1)\in I]|&\le 2C_0\epsilon,\mbox{ and}\\
		|\Pr[S\le u]-\Pr[(N(0,1)\le u]|&\le C_1\epsilon \cdot \frac{1}{1+|u|^3}.
	\end{align*}
\end{theorem}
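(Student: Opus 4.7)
The plan is to prove the two Berry-Esseen bounds via the classical Fourier-analytic method, using characteristic functions together with Esseen's smoothing inequality. First I would set up notation. Let $\varphi(t)=\bE[e^{itS}]=\prod_{i=1}^n\varphi_i(t)$, where $\varphi_i(t)=\bE[e^{itX_i}]$, and let $\psi(t)=e^{-t^2/2}$ denote the characteristic function of the standard normal. Taylor-expanding $\varphi_i$ around $0$ and using $\bE[X_i]=0$, $\bE[X_i^2]=\sigma_i^2$, and $|X_i|\le\epsilon$ (hence $\bE[|X_i|^3]\le\epsilon\sigma_i^2$) gives
$$\varphi_i(t)=1-\tfrac{\sigma_i^2 t^2}{2}+R_i(t),\qquad |R_i(t)|\le \tfrac{|t|^3\epsilon\sigma_i^2}{6}.$$
Taking logs, summing over $i$ (which uses $\sum_i\sigma_i^2=1$), and exponentiating on an interval of the form $|t|\le c/\epsilon$ yields the key estimate $|\varphi(t)-\psi(t)|\le C\epsilon|t|^3 e^{-t^2/4}$.

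To derive the first (uniform) bound I would apply Esseen's smoothing inequality to $F(x)=\Pr[S\le x]$ and $\Phi(x)$, namely
$$\sup_x|F(x)-\Phi(x)|\le \tfrac{1}{\pi}\int_{-T}^{T}\tfrac{|\varphi(t)-\psi(t)|}{|t|}\,dt+\tfrac{24}{\pi T\sqrt{2\pi}}.$$
Substituting the characteristic-function estimate, the integral is $O\bigl(\epsilon\int t^2e^{-t^2/4}dt\bigr)=O(\epsilon)$, and choosing $T$ of order $1/\epsilon$ makes the boundary term also $O(\epsilon)$. This yields a universal $C_0$ with $\sup_x|F(x)-\Phi(x)|\le C_0\epsilon$. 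For an arbitrary interval $I=(a,b]$ we have $\Pr[S\in I]-\Pr[N(0,1)\in I]=[F(b)-\Phi(b)]-[F(a)-\Phi(a)]$, which produces the factor of $2$ and proves the first inequality.

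For the second (non-uniform) bound, I would dispatch $0<u\le 1$ directly from the uniform bound (absorbing constants into $C_1$). For $u>1$, I would use a Petrov-type non-uniform smoothing inequality: integrating by parts three times in the inversion formula for $F-\Phi$ yields an estimate of the form
$$u^3\,|F(u)-\Phi(u)|\le C\int_{-\infty}^{\infty}\Bigl|\tfrac{d^3}{dt^3}(\varphi(t)-\psi(t))\Bigr|\,dt+\text{boundary terms},$$
and the third derivative of $\varphi=\prod_i\varphi_i$ is controlled via the product rule, substituting the Taylor remainders from the first step. The $\epsilon$-dependence remains linear because each $\varphi_i$ differs from its Gaussian surrogate by $O(\epsilon\sigma_i^2|t|^3)$, while the remaining factors in the product contribute an $e^{-t^2/4}$-type decay that keeps all of the resulting integrals convergent.

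The main obstacle is producing the $1/(1+|u|^3)$ decay in (2): naive tail bounds $|F(u)-\Phi(u)|\le\Pr[|S|>u]+\Pr[|N(0,1)|>u]$ give only Gaussian (not polynomial) tails, so the improvement comes entirely from the multiplication-by-$u^3$ / integration-by-parts step, which requires all three derivatives of $\varphi$ to be tracked carefully so that the third-moment (hence $\epsilon$) dependence stays linear; the concrete value $C_1\approx 21.82$ quoted in Lemma~\ref{lemma:C1} reflects exactly this bookkeeping. Since both statements are given as Exercises~5.16 and~5.31(d) in O'Donnell's \emph{Analysis of Boolean Functions}, I would follow those hints: Exercise~5.16 walks through the Esseen-smoothing proof of~(1), and 5.31(d) is the specialized refinement yielding~(2).
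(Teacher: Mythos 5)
The paper does not prove this statement at all: it is quoted as a known external result, with the two inequalities attributed to \cite{AnalysisOfBoolean} (Ex.~5.16 and 5.31(d)) and the numerical constants $C_0\approx 0.47$, $C_1\approx 21.82$ delegated to the cited references, so there is no in-paper argument to compare against. That said, your sketch follows the standard Fourier-analytic route, and the uniform half is essentially complete and correct: the third-order Taylor bound with $\bE[|X_i|^3]\le\epsilon\sigma_i^2$, the log--sum--exponentiate step on $|t|\le c/\epsilon$ (valid because $\sigma_i^2\le\epsilon^2$ keeps each $\varphi_i$ near $1$ there), Esseen smoothing with $T\asymp 1/\epsilon$, and the factor of $2$ obtained by writing $\Pr[S\in I]-\Pr[N(0,1)\in I]$ as a difference of two CDF discrepancies all check out. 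The non-uniform half is where the real substance lies and your treatment of it is only a gesture: ``integrate by parts three times in the inversion formula'' is not a routine computation, and the standard proofs (Nagaev, Bikelis, Petrov) require either exponential tilting or a genuinely non-uniform smoothing lemma; extracting the $1/(1+|u|^3)$ decay while keeping the $\epsilon$-dependence linear is precisely the bookkeeping that produces $C_1\approx 21.82$. One small conceptual slip: the defect of the naive bound $|F(u)-\Phi(u)|\le\Pr[|S|>u]+\Pr[|N(0,1)|>u]$ is not that Gaussian tails decay too slowly (they decay faster than polynomially) but that this bound carries no factor of $\epsilon$, so it is useless in the regime of small $\epsilon$ and moderate $u$ that the theorem is actually used for in Lemma~\ref{lemma:C1} and Theorem~\ref{theorem:lpDeviation}. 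Since the theorem is a classical cited result and you correctly identify both the right technique and the genuine obstruction, the proposal is acceptable as an outline, with the understanding that the second inequality's proof would in practice be imported from the references rather than reconstructed.
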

Optimal values for~$C_0$ and~$C_1$ are not known, but current best estimates are~$C_0\approx0.47$ and~$C_1\approx 21.82$~\cite{BEref1,BEref2}.

\begin{proof} [Proof of Lemma~\ref{lemma:C1}]
	Following the proof of~\cite{TestingHalfspaces} (Prop.~32), with minor adjustments, we have
	\begin{align}\label{equation:someIntegrals}
		\bE&[|\ell(\boldx)-\mu|]=\int_{0}^{\infty}\Pr[|\ell(\boldx)-\mu|>s]ds\nonumber\\
		&=\int_{0}^{\infty}\Pr[\ell(\boldx)>\mu+s]+\Pr[\ell(\boldx)<\mu-s]ds\nonumber\\
		%&=\int_{0}^{\infty}(1-F(\mu+s))+F(\mu-s)ds,
		&=\int_{0}^{\infty}\Pr[N(0,1)>\mu+s]+\Pr[N(0,1)<\mu-s]ds\nonumber\\&\phantom{=}+C_1\epsilon\int_{0}^{\infty}\frac{1}{1+|\mu+s|^3}+\frac{1}{1+|\mu-s|^3} ds\nonumber\\
		&=\int_{0}^{\infty}\Pr[|N(0,1)-\mu|>s]ds\nonumber\\
		&\phantom{=}+C_1\epsilon\int_{0}^{\infty}\frac{1}{1+|\mu+s|^3}ds+C_1\epsilon\int_{0}^{\infty}\frac{1}{1+|\mu-s|^3} ds.
	\end{align}
	The leftmost integral in~\eqref{equation:someIntegrals} equals~$E_\mu$ by definition, and a variable substitutions of~$x=\mu+s$ and~$x=\mu-s$ in the remaining two, respectively, yields
	\begin{align*}
		\eqref{equation:someIntegrals}=E_\mu+C_1\epsilon\int_{-\infty}^{\infty}\frac{1}{1+|x|^3}dx=E_{\mu}+\frac{4\pi C_1\epsilon}{3\sqrt{3}},
	\end{align*}
	where the last equality is a known formula. 
\end{proof}

We now turn to bound the accuracy for~$\ell_p$-Fourier stabilization with~$1<p<\infty$.

\begin{theorem} \label{theorem:lpDeviation}
	For~$h(\boldx)=\sign(\boldx\boldw^\intercal-\theta)$, let~$\ell(\boldx)=\frac{1}{\sigma}\boldx\boldw^{*\intercal}$, where~$w^*_i=\sign(\hat{h}_i)\left( \frac{|\hat{h}_i|}{\norm{\hat{\boldh}}_p} \right)^{\frac{1}{q-1}}$ and~$\sigma=\twonorm{\boldw^*}$, and for any~$\mu>0$ let
	\begin{align}\label{equation:lpDeviationEquation}
		\gamma=\gamma(\mu)=\left|\left(\frac{\norm{\hat{\boldh}}_p^p}{\twonorm{\hat{\boldh}^{\frac{1}{q-1}}}}-\hat{h}_\varnothing\mu\right)-E_\mu\right|.
	\end{align}
	%for some~$\gamma\ge0$, 
	Then,
	\begin{align}\label{equation:AccLossBound}
		\Pr(\sign(\ell(\boldx)&-\mu)\ne h(\boldx))\le\nonumber\\\
		&\tfrac{3}{2}\left(C_0\epsilon+\sqrt{C_0^2\epsilon^2+\sqrt{\tfrac{2}{\pi}}\cdot(\gamma+\rho\epsilon)}\right),
		%\tfrac{3}{2}\left(\tfrac{C}{\sqrt{n}}+\sqrt{\tfrac{C^2}{n}+\sqrt{\tfrac{2}{\pi}}\cdot\gamma}\right).
	\end{align}
	where~$\rho=\frac{4\pi C_1}{3\sqrt{3}}$  and~$\epsilon=\frac{1}{\sigma}\max\{ |w^*_i| \}_{i=1}^n$. 
\end{theorem}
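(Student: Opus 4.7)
The plan is to mirror the proof of Theorem~\ref{theorem:onenorm} almost step-for-step, with two key substitutions. First, the explicit expectation computation $\bE[|S-\mu|]=\alpha(\mu)$ from Lemma~\ref{lemma:auxClaims}, which exploited the $\{\pm 1/\sqrt{n}\}$ structure specific to $p=1$, is replaced by the Berry--Esseen-based upper bound $\bE[|\ell(\boldx)-\mu|]\le E_\mu+\rho\epsilon$ from Lemma~\ref{lemma:C1}. Second, the normalization $1/\sqrt{n}$ is replaced by $1/\sigma=1/\twonorm{\boldw^*}$, so that writing $\ell(\boldx)=\sum_i a_ix_i$ with $a_i=w_i^*/\sigma$ gives $\sum_i a_i^2=1$ and $|a_i|\le \epsilon$, which is exactly what Lemmas~\ref{lemma:C0} and~\ref{lemma:C1} require.

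The first concrete computation is Plancherel applied to $\bE[h(\boldx)(\ell(\boldx)-\mu)]=\tfrac{1}{\sigma}\sum_i \hat{h}_iw_i^*-\hat{h}_\varnothing\mu$. Plugging in $w_i^*=\sign(\hat{h}_i)(|\hat{h}_i|/\pnorm{\hat{\boldh}})^{1/(q-1)}$ and using $1+\tfrac{1}{q-1}=\tfrac{q}{q-1}=p$, the sum collapses to $\pnorm{\hat{\boldh}}^{p}/\pnorm{\hat{\boldh}}^{1/(q-1)}$. A parallel computation of $\sigma^2=\sum_i |w_i^*|^2$ yields $\sigma=\twonorm{\hat{\boldh}^{1/(q-1)}}/\pnorm{\hat{\boldh}}^{1/(q-1)}$, where $\hat{\boldh}^{1/(q-1)}$ denotes the entrywise $\tfrac{1}{q-1}$-power. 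Multiplying through produces precisely the expression $\pnorm{\hat{\boldh}}^p/\twonorm{\hat{\boldh}^{1/(q-1)}}-\hat{h}_\varnothing\mu$ sitting inside the absolute value in the definition of $\gamma$.

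Combining this identity with Lemma~\ref{lemma:C1} gives
\begin{align*}
\bE\bigl[(\ell(\boldx)-\mu)(\sign(\ell(\boldx)-\mu)-h(\boldx))\bigr]
&=\bE[|\ell(\boldx)-\mu|]-\bE[h(\boldx)(\ell(\boldx)-\mu)]\\
&\le \gamma+\rho\epsilon,
\end{align*}
where the absolute value in~\eqref{equation:lpDeviationEquation} absorbs the sign of the difference and the left-hand side is manifestly non-negative since $|h(\boldx)|=1$. From here the argument is a direct replay of the closing half of Theorem~\ref{theorem:onenorm}: apply Lemma~\ref{lemma:C0} to obtain $\Pr(|\ell(\boldx)-\mu|\le u)\le \eta(u):=u\sqrt{2/\pi}+2C_0\epsilon$; assume for contradiction $\Pr(\sign(\ell(\boldx)-\mu)\ne h(\boldx))>\tfrac{3}{2}\eta(u)$; deduce that the disagreement event restricted to $|\ell(\boldx)-\mu|>u$ has probability at least $\eta(u)/2$, which forces the expectation above to be at least $u\,\eta(u)$; combine with the upper bound $\gamma+\rho\epsilon$ to produce the quadratic $\sqrt{2/\pi}\cdot u^2+2C_0\epsilon\cdot u-(\gamma+\rho\epsilon)<0$; solve for the smallest positive $u$ violating this via the quadratic formula; and plug that $u$ back into $\tfrac{3}{2}\eta(u)$ to arrive at~\eqref{equation:AccLossBound}.

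The genuinely new work relative to the $p=1$ proof is the algebraic identification $\bE[h(\boldx)\ell(\boldx)]=\pnorm{\hat{\boldh}}^p/\twonorm{\hat{\boldh}^{1/(q-1)}}$, together with the matching expression for $\sigma$; this is the main place to be careful, since the exponents $p-1$, $1/(q-1)$, and $p$ must line up exactly for the two normalizations to combine cleanly. Once that identification is made, the substitution of $\alpha(\mu)$ by the upper bound $E_\mu+\rho\epsilon$ is a direct plug-in of Lemma~\ref{lemma:C1}, and the ensuing contradiction-and-quadratic step is structurally identical to the $p=1$ case---the only visible changes are the additive $\rho\epsilon$ under the radical in~\eqref{equation:AccLossBound} and the replacement of $2C_0/\sqrt{n}$ by $2C_0\epsilon$ in the linear term. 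Hence I anticipate no conceptual obstacle, only bookkeeping on the exponents.
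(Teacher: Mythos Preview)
Your proposal is correct and follows essentially the same route as the paper's own proof: the Plancherel computation yielding $\pnorm{\hat{\boldh}}^p/\twonorm{\hat{\boldh}^{1/(q-1)}}-\hat{h}_\varnothing\mu$, the matching computation of $\sigma$, the substitution of Lemma~\ref{lemma:C1} for the exact $\alpha(\mu)$, and the contradiction-plus-quadratic finish via Lemma~\ref{lemma:C0} are exactly the paper's steps, in the same order and with the same bookkeeping on the exponents.
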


% \begin{remark}
% \end{remark}

\begin{proof} %(of Theorem~\ref{theorem:lpDeviation})
	First, notice that 
	\begin{align}\label{equation:sigma}
		\sigma=\sqrt{\sum_{i=1}^n \left( \frac{|\hat{h}_i|}{\pnorm{\hat{\boldh}}} \right)^{\frac{2}{q-1}} }=\pnorm{\hat{\boldh}}^{\frac{1}{1-q}}\cdot\twonorm{\hat{\boldh}^{\frac{1}{q-1}}}.
		%		\begin{cases}
		%			\sqrt{n} & \mbox{if }p=1\\
		%			\sqrt{\sum_{i=1}^n \left( \frac{\hat{h}_i}{\pnorm{\hat{\boldh}}} \right)^{\frac{2}{q-1}} }=\pnorm{\hat{\boldh}}^{\frac{1}{1-q}}\cdot\twonorm{\hat{\boldh}^{\frac{1}{q-1}}}&\mbox{if $p>1$}
		%		\end{cases}.
	\end{align}
	Second, according to Plancherel's identity, 
	\begin{align}\label{equation:taupNorm}
		\bE&[h(\boldx)(\ell(\boldx)-\mu)]%&=\sum_{\cS\subseteq[n]}\hat{h}(\cS)\hat{\ell}(\cS)
		=\tfrac{1}{\sigma}\sum_{i=1}^n \hat{h}_i \sign(\hat{h}_i)\left( \frac{|\hat{h}_i|}{\pnorm{\hat{\boldh}}} \right)^{\frac{1}{q-1}}-\hat{h}_\varnothing\mu\nonumber\\
		&= \frac{1}{\sigma\norm{\hat{\boldh}}_p^{\frac{1}{q-1}}}\sum_{i=1}^n|\hat{h}_i|^p-\hat{h}_\varnothing\mu\nonumber\\
		&\overset{\eqref{equation:sigma}}{=}\frac{\pnorm{\hat{\boldh}}^p}{\pnorm{\hat{\boldh}}^{\frac{1}{1-q}}\cdot\twonorm{\hat{\boldh}^{\frac{1}{q-1}}}\cdot\norm{\hat{\boldh}}_p^{\frac{1}{q-1}}}-\hat{h}_\varnothing\mu\nonumber\\
		&=\frac{\pnorm{\hat{\boldh}}^p}{\twonorm{\hat{\boldh}^{\frac{1}{q-1}}}}-\hat{h}_\varnothing\mu.
		%\sign(\hat{h}_i)=\onenorm{\hat{\boldh}}.
	\end{align}
	Third, we have that
	\begin{align}\label{eqaution:alphan}
		\bE[h(\boldx)(\ell(\boldx)-\mu)]\overset{(a)}{\le} \bE[|\ell(\boldx)-\mu|]\overset{(b)}{\le}E_\mu+\rho\epsilon.
	\end{align}
	where~$(a)$ is since~$h(\boldx)\le 1$, and~$(b)$ is by Lemma~\ref{lemma:C1}. Therefore, by the definition of~$\gamma$, it follows that
	\begin{align}\label{equation:gammaUpperBound}
		\bE&[(\ell(\boldx)-\mu)\cdot(\sign(\ell(\boldx)-\mu)-h(\boldx))]=\nonumber\\
		&=\bE[|\ell(\boldx)-\mu|]-\bE[h(\boldx)(\ell(\boldx)-\mu)]\nonumber\\
		&\overset{(c)}{\le}E_\mu-\frac{\pnorm{\hat{\boldh}}^p}{\twonorm{\hat{\boldh}^{\frac{1}{q-1}}}}+\hat{h}_\varnothing\mu+\rho\epsilon\overset{(d)}{\le} \gamma+\rho\epsilon,
		%\alpha(n)-\tfrac{\onenorm{\hat{\boldh}}}{\sqrt{n}}\le \gamma.
	\end{align}
	where~$(c)$ follows from~\eqref{equation:taupNorm} and~\eqref{eqaution:alphan}, and~$(d)$ from the definition of~$\gamma$~\eqref{equation:lpDeviationEquation}. In what follows, we bound~$\Pr(\sign(\ell(\boldx)-\mu)\ne h(\boldx))$ by studying the expectation in~\eqref{equation:gammaUpperBound}. To this end, notice that for every~$u>0$ (a precise~$u$ will be given shortly), Lemma~\ref{lemma:C0} implies that
	%notice that the~$\ell_2$-norm of the coefficients in~$\ell(\boldx)$ is~$1$. Therefore, for every~$u>0$ (a precise~$u$ will be discussed shortly), 
	%for every~$i\in[n]$
	%we apply Corollary~\ref{corollary:BEcor} on	let~$Z_i=\sign(\hat{h}_i)X$, where~$X$ is a uniform~$\{\pm \tfrac{1}{\sqrt{n}}\}$ random variable, and notice that~$S=\sum_{i=1}^nZ_i=\frac{1}{\sqrt{n}}\ell(\boldx)$, where
	%by considering each~$x_i$ in~$\ell(\boldx)$ as a uniform~$\{ \pm 1\}$ random variable, 
	%Theorem~\ref{theorem:BEthm} implies 
	%Corollary~\ref{corollary:BEcor} implies 
	%and Lemma~\ref{lemma:auxClaims}.A, it follows that
	\begin{align}\label{equation:probell}
		\Pr(|\ell(\boldx)-\mu|\le u)\le 
		%\Pr(|N(0,1)|\le u)+ 2C\epsilon\overset{(c)}{<}
		u\sqrt{\tfrac{2}{\pi}}+ 2C_0\epsilon\triangleq\eta(u).
	\end{align}
	%where~$(c)$ follows from Lemma~\ref{lemma:auxClaims}.C. 
	%, and a precise~$u$ will be discussed shortly. 
	Assume for contradiction that $\Pr(\sign(\ell(\boldx))\ne h(\boldx))> \frac{3}{2}\eta(u)$. Since~$\Pr(|\ell(\boldx)-\mu|>u)\ge 1-\eta(u)$ by~\eqref{equation:probell}, it follows that
	\begin{align}\label{equation:ProbBound}
		\Pr(\sign(\ell(\boldx)-\mu)\ne h(\boldx)\mbox{ and } |\ell(\boldx)-\mu|>u) > \tfrac{\eta(u)}{2}.
	\end{align}
	Now observe that
	\begin{align}\label{equation:Ebound}
		&\bE[(\ell(\boldx)-\mu)(\sign(\ell(\boldx)-\mu)-h(\boldx))]=\nonumber\\
		\nonumber\\&\tfrac{1}{2^n}\left(\sum_{\boldx\vert \sign(\ell(\boldx)-\mu)>h(\boldx)} 2(\ell(\boldx)-\mu) -\right. \nonumber\\
		&\left. \sum_{\boldx\vert \sign(\ell(\boldx)-\mu)<h(\boldx)}2(\ell(\boldx)-\mu)\right).
	\end{align}
	Since all summands in the left summation in~\eqref{equation:Ebound} are positive, and all summands in the right one are negative, by keeping in the left summation only summands for which~$\ell(\boldx)-\mu> u$, and in the right summation only those for which~$\ell(\boldx)-\mu<-u$, we get
	\begin{align}\label{equation:uetau}
		\eqref{equation:Ebound}&\ge 2u\cdot\frac{|\{ \boldx\vert \sign(\ell(\boldx)-\mu)\ne h(\boldx)\mbox{ and }|\ell(\boldx)-\mu|>u \}|}{2^n}\nonumber\\
		&\overset{\eqref{equation:ProbBound}}{>}u\cdot\eta(u).
	\end{align}
	
\begin{figure*}[h!]
	\centering
	\begin{subfigure}{0.31\textwidth}
		\includegraphics[width=\textwidth]{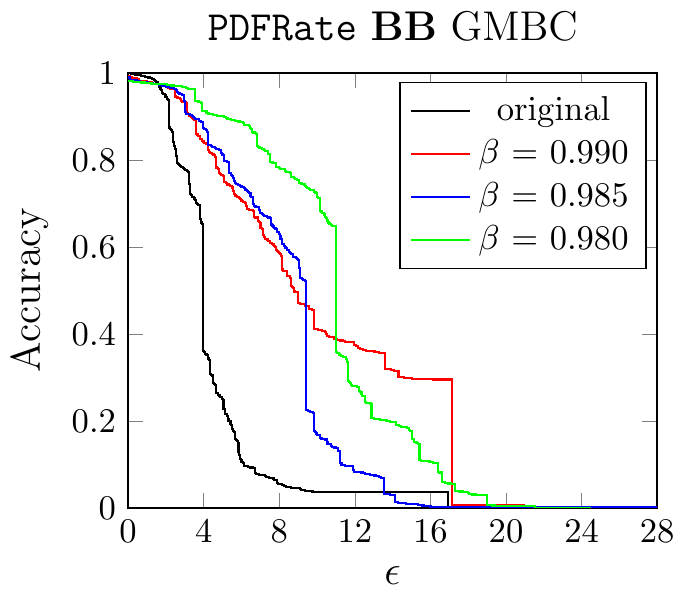}
	\end{subfigure}
	\begin{subfigure}{0.31\textwidth}
		\includegraphics[width=\textwidth]{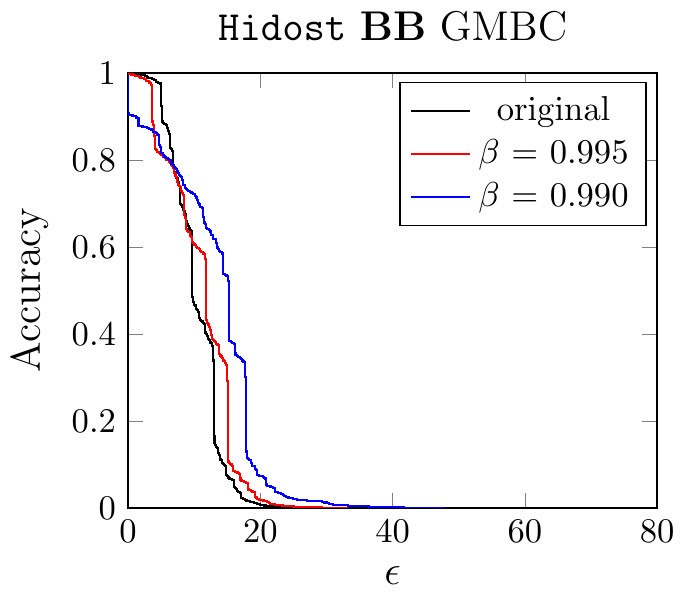}
	\end{subfigure}
	\begin{subfigure}{0.31\textwidth}
		\includegraphics[width=\textwidth]{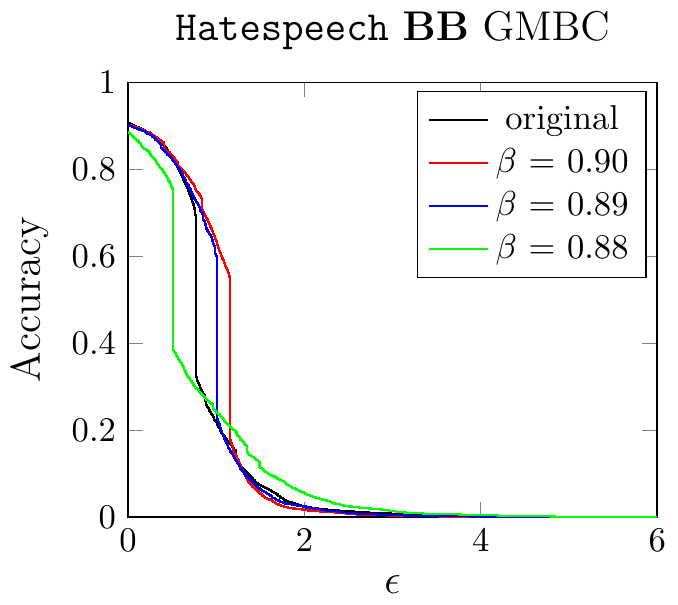}
	\end{subfigure}
	\begin{subfigure}{0.31\textwidth}
		\includegraphics[width=\textwidth]{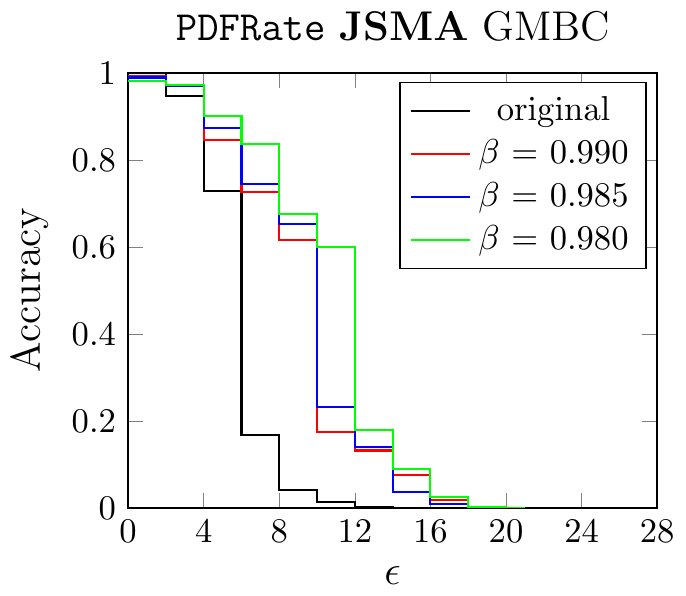}
	\end{subfigure}
	\begin{subfigure}{0.31\textwidth}
		\includegraphics[width=\textwidth]{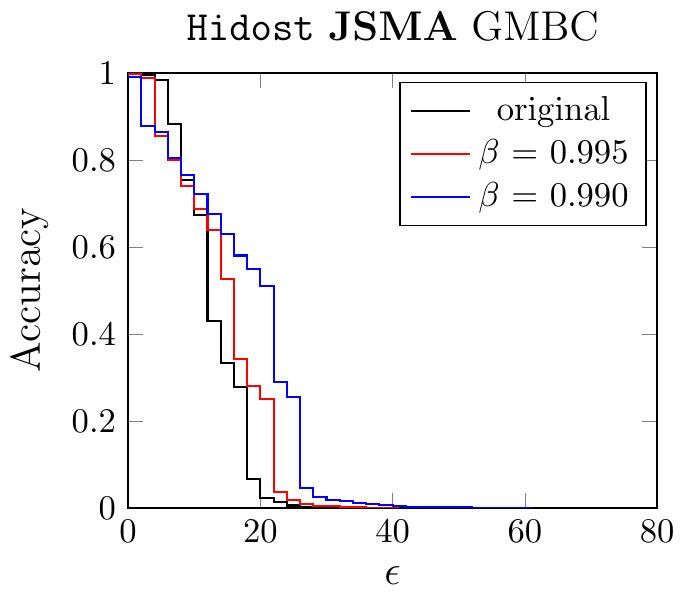}
	\end{subfigure}
	\begin{subfigure}{0.31\textwidth}
		\includegraphics[width=\textwidth]{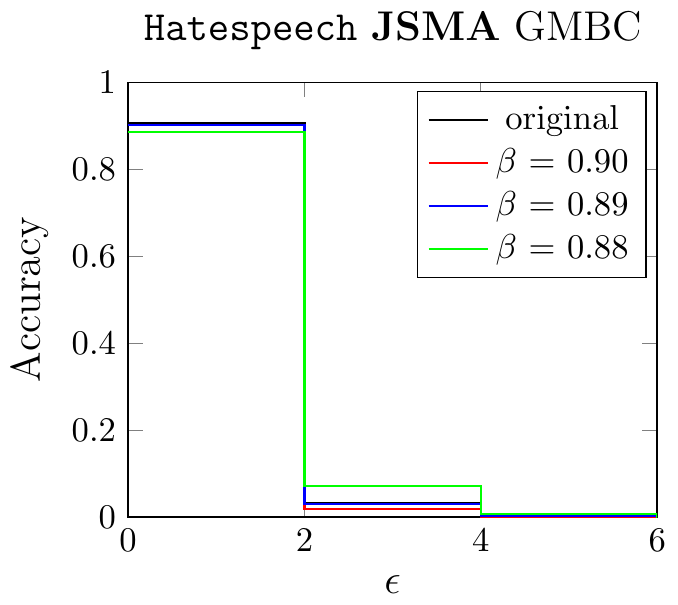}
	\end{subfigure}
	\caption{Robustness of original and stabilized neural network models with \textbf{sigmoid} (using \emph{GMBC}) on PDFRate, Hidost, and Hate Speech datasets (columns) against the BB (top row) and JSMA (bottom row) attacks.  The $x$-axis shows varying levels of $\ell_1$ perturbation bound $\epsilon$ for the attacks.}
		\label{F:gmbc}
\end{figure*}
	
	Combining~\eqref{equation:uetau} with~\eqref{equation:gammaUpperBound}, it follows that
	\begin{align*}
		u\cdot\eta(u)&< \gamma+\rho\epsilon
	\end{align*}
	which by the definition in~\eqref{equation:probell} implies that
	\begin{align}\label{equation:possibleCont}
		\sqrt{\tfrac{2}{\pi}}\cdot u^2+2C_0\epsilon\cdot u-(\gamma+\rho\epsilon)&< 0.
	\end{align}
	We wish to find the smallest positive value of~$u$ which contradicts~\eqref{equation:possibleCont}. Clearly, any positive~$u$ which complies with~\eqref{equation:possibleCont} must satisfy
	\begin{align}\label{equation:usValue}
		u&<\frac{-2C_0\epsilon+\sqrt{4C_0^2\epsilon^2+4\sqrt{\frac{2}{\pi}}\cdot(\gamma+\rho\epsilon)}}{2\sqrt{\frac{2}{\pi}}}
		\nonumber\\
		&=\frac{-C_0\epsilon+\sqrt{C_0^2\epsilon^2+\sqrt{\frac{2}{\pi}}\cdot(\gamma+\rho\epsilon)}}{\sqrt{\frac{2}{\pi}}},
		%<\frac{1}{\sqrt[4]{n}}\cdot \frac{-C_0+\sqrt{C_0^2+\gamma\sqrt{2/\pi}}}{\sqrt{2/\pi}},
	\end{align}
	and hence setting~$u$ to the rightmost expression in~\eqref{equation:usValue} leads to a contradiction. This implies that
	\begin{align*}
		\Pr&(\sign(\ell(\boldx)-\mu)\ne h(\boldx))\le \tfrac{3}{2}\eta(u)\overset{\eqref{equation:probell}}{=}\tfrac{3}{2}(u\sqrt{\tfrac{2}{\pi}}+ 2C_0\epsilon)\\
		&=\tfrac{3}{2}\left(-C_0\epsilon+\sqrt{C_0^2\epsilon^2+\sqrt{\tfrac{2}{\pi}}\cdot(\gamma+\rho\epsilon)}+2C_0\epsilon\right)\\
		&=\tfrac{3}{2}\left(C_0\epsilon+\sqrt{C_0^2\epsilon^2+\sqrt{\tfrac{2}{\pi}}\cdot(\gamma+\rho\epsilon)}\right).\qedhere
		%&<\tfrac{1}{\sqrt[4]{n}}\cdot \tfrac{3}{2} \cdot\left(C_0+\sqrt{C_0^2+\gamma\sqrt{2/\pi}}\right).\qedhere
	\end{align*}
\end{proof}

\section{Additional Experiments}

\subsection{GMBC Algorithm}

In~Section~\ref{section:experiments}, we presented the results of neural network stabilization using the GMB algorithm which only uses accuracy in assessing when the accuracy constraint has been violated.
Here we present analogous results for using GMBC.
As we can see from Figure~\ref{F:gmbc}, overall the GMB algorithm is considerably more effective.
Indeed, if we use the blended algorithm in which we always run both GMB and GMBC and take the best solution of the two in terms of robustness, the result is equivalent to running GMB in our setting.

\begin{figure*}[t]
	\centering
	\begin{subfigure}{0.31\textwidth}
		\includegraphics[width=\textwidth]{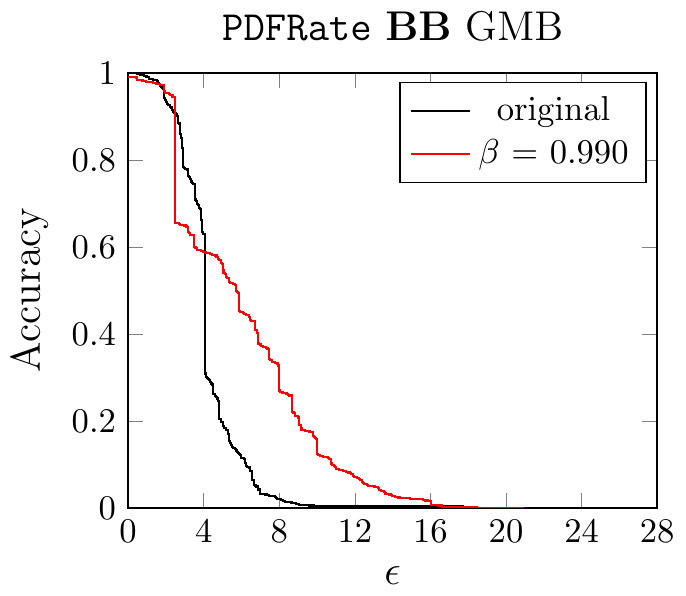}
	\end{subfigure}
	\begin{subfigure}{0.31\textwidth}
		\includegraphics[width=\textwidth]{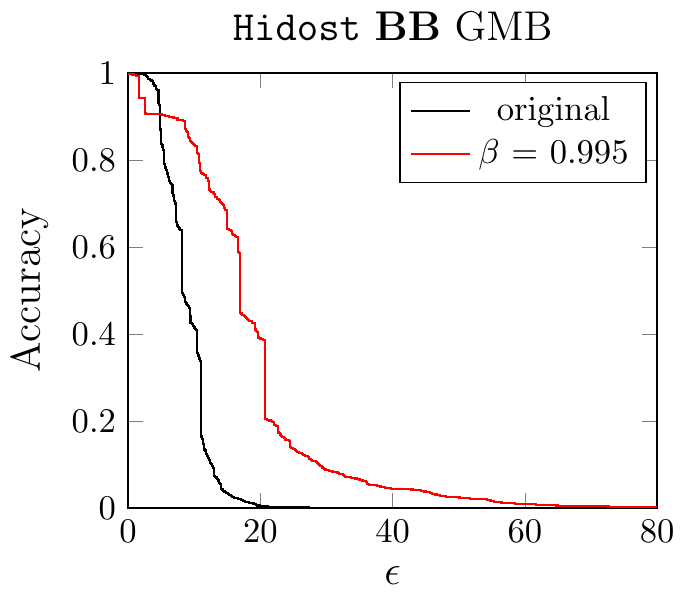}
	\end{subfigure}
	\begin{subfigure}{0.31\textwidth}
		\includegraphics[width=\textwidth]{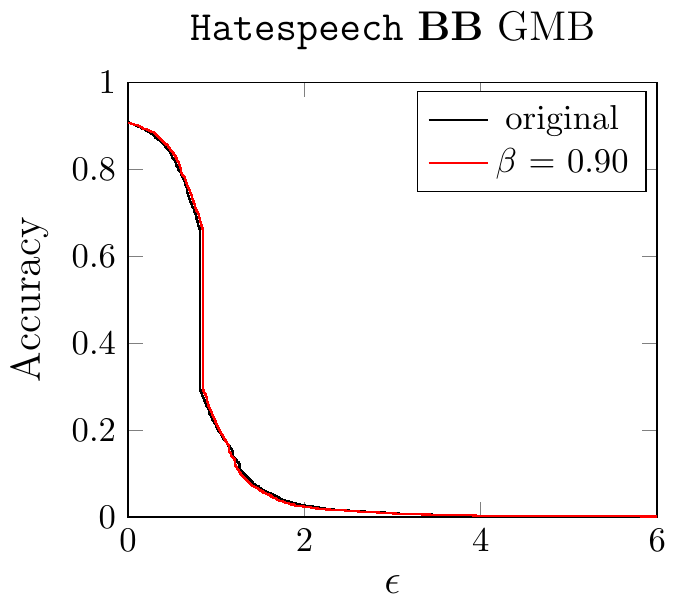}
	\end{subfigure}
	\begin{subfigure}{0.31\textwidth}
		\includegraphics[width=\textwidth]{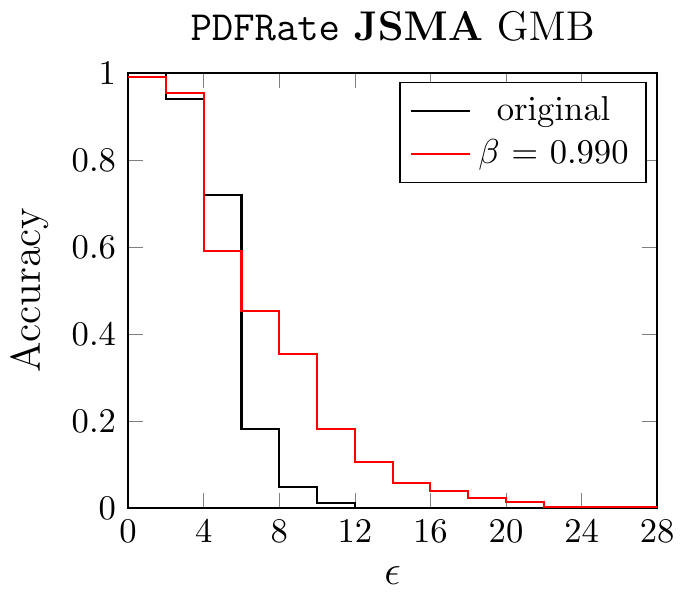}
	\end{subfigure}
	\begin{subfigure}{0.31\textwidth}
		\includegraphics[width=\textwidth]{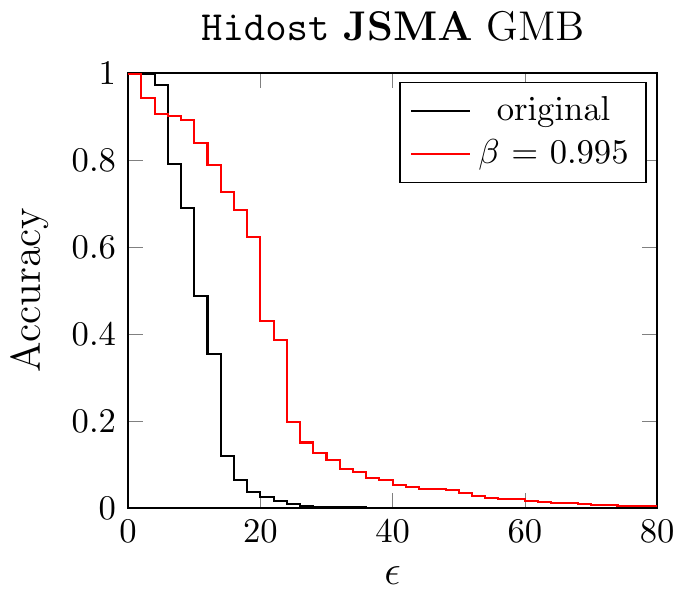}
	\end{subfigure}
	\begin{subfigure}{0.31\textwidth}
		\includegraphics[width=\textwidth]{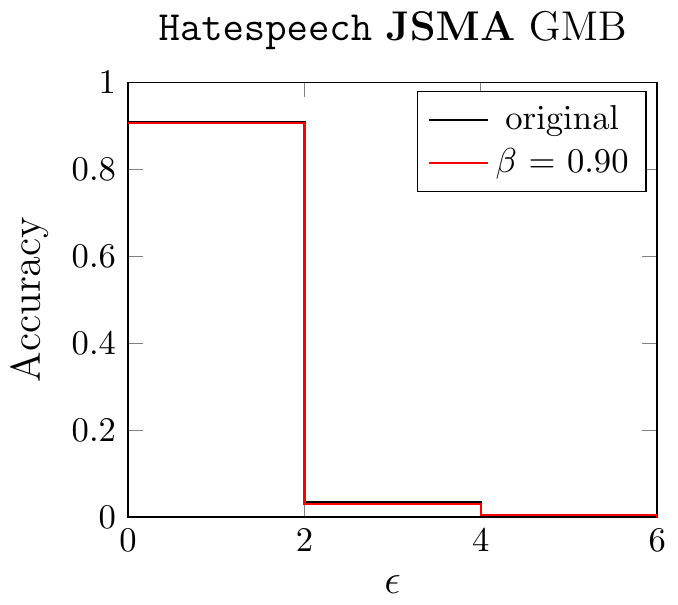}
	\end{subfigure}
	\caption{Robustness of original and stabilized neural network models with \textbf{ReLU} activations (using \emph{GMBC}) on PDFRate, Hidost, and Hate Speech datasets (columns) against the BB (top row) and JSMA (bottom row) attacks.  The $x$-axis shows varying levels of $\ell_1$ perturbation bound $\epsilon$ for the attacks.}
	\label{F:relu}
\end{figure*}

\subsection{ReLU Activation Function}

Our experiments in Section~\ref{section:experiments} used the \textbf{sigmoid} activation functions as neurons.
Here, we present results for neural networks that instead use the more prevalent \textbf{ReLU} activation functions.
As we can see from Figure~\ref{F:relu}, the results are qualitatively the same: stabilization considerably improves robustness of the networks.
However, the impact is somewhat smaller than for the sigmoidal neural networks, and stabilization appears to have no effect on the Hate Speech dataset in this case.

\section{Speeding up GMB}
\label{S:fastgmb}

If we assume that accuracy decreases monotonically as more neurons are stabilized, then \emph{GMB} can be rephrased as a search problem, which can be solvable via binary search. The key insight is that our proxy for computing change in robustness is based only on the weights of an individual neuron. Therefore, the order in which neurons are stabilized is computed before the algorithm begins. In \emph{GMB}, computing the accuracy of the model is the time-consuming step, and here we reduce the number of accuracy evaluations from $O(k)$ to $O(\log k)$, where $k$ is the size of the first layer of the network (number of neurons). Runtime experiment results can be found in Table~\ref{table:runtime}.

\begin{table*}[h]
    \centering
    \begin{tabular}{rllllll}
\hline
   $\beta$ & $16$ neurons   & $64$ neurons   & $256$ neurons   & $1024$ neurons   & $4096$ neurons   & $16384$ neurons   \\
\hline
      0.99 & $0.55$         & $0.60$         & $0.70$          & $1.37$           & $3.19$           & $9.89$            \\
      0.98 & $0.40$         & $0.56$         & $0.65$          & $1.30$           & $3.19$           & $10.51$           \\
      0.97 & $0.42$         & $0.56$         & $0.64$          & $1.32$           & $3.55$           & $9.57$            \\
\hline
\end{tabular}

    \caption{The running time of the algorithm outlined in Appendix~\ref{S:fastgmb} on a 2018 MacBook Pro. The algorithm was tested on networks classifying the \texttt{PDFRate} dataset with varying numbers of neurons on their first layer. For completeness, we also varied the accuracy threshold $\beta$, but we observe this made no significant impact on the run time.}
    \label{table:runtime}
\end{table*}

% The key insight is that, since we use a per-neuron proxy for computing robustness (so that total robustness impact of stabilization is additive across neurons), the time-consuming part of the algorithm is the computation of accuracy of a given set of stabilized neurons $\cS$, since this involves a computation over a (typically very large) validation data set (which is what we use to choose $\cS$ using either \emph{GMB} or \emph{GMBC}).
% Thus, binary search reduces the number of accuracy evaluations from linear to logarithmic.

In \textrm{GMB}, we aim to maximize our proxy for robustness while keeping the accuracy above a threshold. At the beginning of the algorithm, we compute $\Delta R$ for each neuron, the increase in robustness caused by stabilizing that neuron, and aim to maximize the sum of the $\Delta R$s. We do this greedily by repeatedly stabilizing the next neuron with the largest $\Delta R$. Then, we order neurons from $h_1, \ldots, h_t$ based on decreasing $\Delta R$, and GMB stabilizes $h_1$, then $h_2$, and so forth, until we stabilize the largest $h_i$ such that the accuracy is still above the $\beta$ threshold.

It is evident that this problem is equivalent to the search problem of finding the largest~$i$ such that the accuracy is $\ge \beta$. By our monotonicity assumption, accuracy decreases with increasing~$i$, hence binary search is applicable. At each step of this binary search, we evaluate a given index~$i$. We stabilize all neurons $h_1, \ldots, h_i$ and then evaluate the accuracy of the model. 
% If it is below $\beta$, we wish to stabilize fewer neurons, so we discard all possible indices $> i$. If it is above $\beta$, we discard all possible indices $< i$. We continue as per binary search.\red{Below/above confusion here. Discard?}
If it is below $\beta$, we wish to stabilize fewer neurons, and if it is above $\beta$, we wish to stabilize more. 

% The accuracy of the model is to be evaluated at each step of the binary search, so there a $O(\log k)$ accuracy evaluations. Additionally, through careful implementation we require only $O(k)$ memory operations. In our first iteration, we need to modify $k/2$ neurons before evaluating the model. If we choose to stabilize more, we will stabilize $k/4$ more, and if we choose to stabilize less, we will revert $k/4$ to their original weights. In the limit, this is fewer than~$k$ memory operations.

We implemented GMBC with binary search and tested its runtime for networks classifying \texttt{PDFRate} with varying numbers of neurons in their hidden layer. All tests were run on a 2018 MacBook Pro. The results can be found in Table~\ref{table:runtime}.  As expected, we observe that it had insignificant effects on the run time. We additionally note that the trend does not appear logarithmic. This is due to the fact that accuracy evaluations take more time for large networks, in spite of conducting $O(\log k)$ accuracy evaluations.
%and this does not contradict our claim that the algorithm makes $O(\log k)$ accuracy evaluations.

\end{document}